\setlist[enumerate]{leftmargin=.4in}
\setlist[itemize]{leftmargin=.3in}
\def\etal{\textit{et al. }}
\newcommand{\han}[1]{{\color{black}{#1}}}
\crefname{hypothesis}{Hypothesis}{Hypotheses}
\title{QIS : Interactive Segmentation via Quasi-Conformal Mappings
\thanks{Submitted to the editors DATE.
}}
\author{Han Zhang \thanks{Department of Mathematics, City University of Hong Kong and Hong Kong Centre for Cerebro-Cardiovascular Health EngineeringCity University of Hong Kong, Hong Kong, China. (\email{hzhang863-c@my.cityu.edu.hk}).}
\and Daoping Zhang \thanks{School of Mathematical Sciences and LPMC, Nankai University, Tianjin, China (\email{daopingzhang@nankai.edu.cn}).}
\and Lok Ming Lui \thanks{Department of Mathematics, Chinese University of Hong Kong, Hong Kong, China (\email{lmluic@math.cuhk.edu.hk}).}
}
\begin{document}
\maketitle
\begin{abstract}
Interactive segmentation allows users to provide meaningful input to guide the segmentation process. However, an important problem in interactive segmentation lies in determining how to incorporate minimal yet meaningful user guidance into the segmentation model. In this paper, we propose the quasi-conformal interactive segmentation (QIS) model, which incorporates user input in the form of positive and negative clicks. Users mark a few pixels belonging to the object region as positive clicks, indicating that the segmentation model should include a region around these clicks. Conversely, negative clicks are provided on pixels belonging to the background, instructing the model to exclude the region near these clicks from the segmentation mask. By solving our proposed theoretical supported model, the segmentation mask is obtained by deforming a template mask with the same topology as the object of interest using a quasiconformal mapping. This approach makes each user input effectively used and helps to avoid topological errors in the segmentation results. We provide a thorough theoretical analysis of the proposed model for its ability to include or exclude regions of interest or disinterest based on the user's indication. To evaluate the performance of QIS, we conduct experiments on synthesized images, medical images, and natural images. The results demonstrate the efficacy of our proposed method.
\end{abstract}

\begin{keywords}
  interactive segmentation, quasi-conformal mapping, topology-preserving, deformable model, medical image segmentation, noisy image segmentation
\end{keywords}

\begin{AMS}
 68U05, 68U10, 94A08
\end{AMS}
\section{Introduction}

Interactive segmentation leverages the collaborative strengths of humans and machines, allowing users to provide interactive input to guide the segmentation process\cite{rother2004grabcut,rada2014variational}. Through techniques such as mouse clicks or scribbles, users actively participate in refining the segmentation and providing valuable feedback. As a human-involved technique, the segmented results of interactive segmentation become more versatile and believable in various scenarios. For example, in clinical medicine, accurately delineating diseased regions is a common requirement. However, due to differences in imaging device configurations, direct outputs from a non-interactive model may not yield satisfactory results for images captured with various devices. Improvements and corrections are often necessary to achieve superior segmentation. Instead of overhauling algorithms to suit the unique setup of each imaging device, an interactive segmentation model, capable of accommodating specific user guidance, offers an economical and readily deployable solution. This approach instills greater confidence in clinical professionals as they can actively participate in the segmentation process.

An efficient interactive segmentation model should minimize the need for user guidance. As such, three key principles should be followed for designing an effective interactive segmentation model: (1)Effective interaction response: Each new interaction input should result in a meaningful segmentation correction. For example, if a user clicks to include a specific region, the model should at least encompass the local area around the click. Otherwise, the click becomes ineffective, making the correction process cumbersome and unproductive. Despite its intuitive nature, many existing methods fail to achieve this, as highlighted in our experimental section. (2)Adaptability: A robust interactive segmentation model should be easily adaptable to various image modalities and types. In contrast, learning-based methods often require extensive training data with well-labeled ground truth, limiting their applicability to new image types. Given that labeling images is a primary and common application scenario for interactive segmentation, the reliance on high-quality training data is a significant drawback. (3)Minimal outliers: A good interactive segmentation model should minimize the production of noise and outliers. If the model generates a substantial amount of noise, it becomes tedious and time-consuming to manually remove these inaccuracies, ultimately defeating the purpose of computer-assisted segmentation by bringing extra workloads.

Following those principles, in this paper, we present our method, called the {\it Quasi-Conformal Interactive Segmentation (QIS)}, as a solution to the aforementioned challenges. QIS is a deformable model designed for image segmentation, utilizing the fidelity term of the Mumford-Shah model \cite{mumford1989optimal}. It achieves segmentation by optimizing a quasi-conformal mapping that can effectively warp a template mask to fit a desired region of interest. The core of our approach relies on a quasi-conformal registration-based segmentation, which forms the backbone of our method. User interactions are defined by two actions: a positive click and a negative click, represented by left and right clicks, respectively. These clicks serve to indicate the regions that the user wants to include or exclude from the segmentation process. To incorporate user input seamlessly, we transform the clicks into a click map, which is then integrated into the proposed model using a novel interactive segmentation energy term that we have designed. The interactive segmentation energy function has been carefully analyzed and designed to ensure that the model accurately includes or excludes the regions of interest or disinterest indicated by the click map associated with each click. Detailed explanations of our method and the energy term can be found in Section \ref{sec:method}.

One of the benefits of our method is its utilization of the theoretical foundation of quasi-conformal geometry to generate a bijective mapping. This results in the warped mask maintaining the same topology as the template mask. This property proves advantageous in avoiding outliers and discontinuities in the segmentation results, thereby reducing the need for manual correction significantly. Additionally, the fixed and known topological structure allows for the analysis of optimality for the fidelity term of the Mumford-Shah model. This analysis, as demonstrated in Section \ref{sec:method}, provides theoretical support for the effectiveness of our proposed method in accurately including or excluding regions of interest or disinterest.

To evaluate the performance and advantages of our proposed Quasi-conformal Interactive Segmentation method, we conducted four comprehensive experiments on various image types, including synthetic images, medical images, natural images, and degraded images corrupted by noise. Additionally, we compared our method against alternative approaches. The experimental results unequivocally demonstrate the efficacy of our proposed method.

To summarize, the contributions of this work are as follows:

\smallskip

\begin{itemize}
    \item We propose an interactive segmentation energy term to incorporate user interactions. In particular, we provide an analysis of this model guaranteeing its ability to accurately include or exclude regions of interest or disinterest. This offers a comprehensive analysis and theoretical foundation for our interactive segmentation method.
    
    \item Our interactive segmentation model is based on the principles of quasi-conformal geometry. Leveraging this theory, our model ensures that the resulting segmentation maintains the same topology as the predefined template mask while exhibiting robustness against noise.

    \item An iterative scheme based on the generalized Gauss-Newton framework is presented to numerically solve the proposed interactive segmentation model.
\end{itemize}
\section{Related works}
In this section, we review some related works from three aspects, including computational quasi-conformal geometry, interactive segmentation and deformable model.

\subsection{Computational quasi-conformal geometry}
Computational quasi-conformal geometry is a mathematical tool for analyzing and managing geometric distortions in mappings. Conformal mappings, a subset of quasi-conformal mappings, are crucial in geometry processing, particularly in tasks like texture mapping and surface parameterizations \cite{gu2004genus,gu2003global,levy2002least,lui2014teichmuller}. The Beltrami coefficient quantitatively assesses local geometric distortions and allows precise control over mapping characteristics, aiding in minimizing conformality distortion \cite{choi2016spherical,choi2020free}.

\subsection{Interactive segmentation}
\han{Interactive segmentation is typically formulated as an optimization problem \cite{grady2006random,gulshan2010geodesic,kim2010nonparametric}}. Rother et al. developed GrabCut by solving an optimization problem over graph representation \cite{rother2004grabcut}. Badshah and Chen \cite{rada2014variational} developed a selective segmentation method using active contours. \cite{rada2012new,rada2014variational} proposed the Rada-Chen model, solved with a level-set approach. Spencer and Chen \cite{spencer2015convex,spencer2018parameter} introduced a model with a global minimizer independent of initial user input, later improved by a multigrid algorithm \cite{roberts2019multigrid}. Roberts et al. \cite{roberts2019convex} incorporated edge-weighted geodesic distance, while Ali et al. \cite{ali2021image} added an area-based fitting term for multi-region segmentation. DIOS \cite{xu2016deep} pioneered deep learning in interactive segmentation, embedding clicks into distance maps and using them with the original image. Subsequent works \cite{li2018interactive,liew2019multiseg} addressed ambiguity by predicting multiple potential results. FCANet \cite{lin2020interactive} used the first click for visual attention construction. BRS \cite{jang2019interactive} introduced online optimization for model updates during annotation, which f-BRS \cite{sofiiuk2020f} accelerated by targeting specific layers. CDNet \cite{chen2021conditional} employed self-attention for consistency, and RITM \cite{sofiiuk2022reviving} added the previous mask as input for robustness.

\subsection{Deformable model}
Various research groups have delved into the exploration of deformable models for image segmentation, aiming to derive segmentation results by seeking suitable deformations. An exemplar in this domain is the active contour model \cite{kass1988snakes}, manipulating a set of points discretizing a curve to encapsulate the object's boundary. Cootes et al. \cite{cootes1995active} extend this model to a learnable variant by extracting principal components through principle component analysis (PCA). More recent deformable image segmentation models incorporate a dense spatial deformation map between a template image and a target image, deforming the template mask to match the object's shape in the image.

Chen et al. \cite{chen2021generalized} introduce a dual-front scheme based on asymmetric quadratic metrics, integrating image features and a vector field derived from the evolving contour. Chan et al. \cite{chan2018topology} propose a deformation-based segmentation model using quasi-conformal maps, ensuring topology preservation in the segmentation results. Siu et al. \cite{siu2020image} incorporate the dihedral angle in the deformable model, considering partial convexity and topology constraints. Zhang et al. \cite{zhang2021topology} present a deformable model using the hyperelastic regularization, leading to a topology-preserving segmentation model applicable to 3D volumetric images. Convexity priors are further integrated into the model in \cite{zhang2021topoconv}. In recent years, learning-based models leveraging deformable structures have gained attention, particularly since the introduction of spatial transformer networks \cite{jaderberg2015spatial}. To enhance topology preservation in the final output, Lee et al. \cite{lee2019tetris} apply the Laplacian regularization and Zhang \etal \cite{zhang2022topology} proposed the Relu-Jacobian regularization. Despite these advancements, some of these methods lack a mathematical guarantee of topology preservation or may yield suboptimal results, especially for structures with complex geometries, due to potential over-constraints.
\section{Quasi-conformal segmentation model}

In this section, we briefly review the theory of the quasi-conformal geometry and its applications to image segmentation, which is related to this paper.

\subsection{Mathematical background on quasi-conformal geometry}
Mathematically, an orientation-preserving homeomorphism $f(z):\mathbb{C}\rightarrow\mathbb{C}$ is said to be quasi-conformal if it satisfies the following Beltrami equation \cite{lehto1973quasiconformal}
\begin{equation*}
\frac{\partial f}{\partial \bar{z}} = \mu(z)\frac{\partial f}{\partial z}
\end{equation*}
for some complex-valued Lebesgue measurable function $\mu(z):\mathbb{C}\rightarrow\mathbb{C}$ satisfying $\|\mu(z)\|_{\infty}<1$. Here, $\mu(z)$ is called the Beltrami coefficient \cite{bers1977quasiconformal} and $\|\cdot\|_{\infty}$ represents the infinity norm. Let $z\in\mathbb{C}$ be $x_{1}+\bm{i} x_{2}, x_{1},x_{2}\in\mathbb{R}$ and $f(z) = f_{1}(x_{1},x_{2})+\bm{i} f_{2}(x_{1},x_{2})$. Then by the Wirtinger derivative \cite{remmert1991theory}, $\frac{\partial f}{\partial \bar{z}} = \frac{1}{2}(\frac{\partial f}{\partial x_{1}}+\bm{i}\frac{\partial f}{\partial x_{2}})$ and $\frac{\partial f}{\partial z} = \frac{1}{2}(\frac{\partial f}{\partial x_{1}}-\bm{i}\frac{\partial f}{\partial x_{2}})$, we have
\begin{equation}\label{compute_BC}
\begin{split}
|\mu(z)|^{2} &= \left|\frac{\partial f}{\partial \bar{z}}/\frac{\partial f}{\partial z}
\right |^{2} \\
& = \frac{(\frac{\partial f_{1}}{\partial x_{1}}-\frac{\partial f_{2}}{\partial x_{2}})^{2}+(\frac{\partial f_{2}}{\partial x_{1}}+\frac{\partial f_{1}}{\partial x_{2}})^{2}}{(\frac{\partial f_{1}}{\partial x_{1}}+\frac{\partial f_{2}}{\partial x_{2}})^{2}+(\frac{\partial f_{2}}{\partial x_{1}}-\frac{\partial f_{1}}{\partial x_{2}})^{2}}\\
& = \frac{\|\nabla f\|_{\mathrm{F}}^{2}-2\mathrm{det}\nabla f}{\|\nabla f\|_{\mathrm{F}}^{2}+2\mathrm{det}\nabla f},
\end{split}
\end{equation}
where $\nabla f = \begin{pmatrix}\frac{\partial f_{1}}{\partial x_{1}} & \frac{\partial f_{1}}{\partial x_{2}} \\ \frac{\partial f_{2}}{\partial x_{1}} & \frac{\partial f_{2}}{\partial x_{2}} \end{pmatrix}$ is the Jacobian of $f$, $\mathrm{det}$ is the determinant, and $\|\cdot\|_{\mathrm{F}}$ is the Frobenius norm. From \eqref{compute_BC}, it is easy to derive that $\|\mu(z)\|_{\infty}<1 \Leftrightarrow \mathrm{det}\nabla f>0$, which ensures that a quasi-conformal mapping is one-to-one by the inverse function theorem \cite{lang2012calculus}. In other words, any diffeomorphic deformation must be a quasi-conformal mapping. In addition, we can also see that $\mu(z)=0$ if and only if the Cauchy-Riemann equations, $\frac{\partial f_{1}}{\partial x_{1}}=\frac{\partial f_{2}}{\partial x_{2}}$ and $\frac{\partial f_{2}}{\partial x_{1}}=-\frac{\partial f_{1}}{\partial x_{2}}$, are satisfied. This indicates that the quasi-conformal mapping is the generalization of the conformal mapping. 

The Beltrami coefficient $\mu$ provides us lots of information about the map $f$. In the infinitesimal scale, with respect to its local parameter, a quasi-conformal mapping $f$ can be approximated as follows
\begin{equation}\label{f_expansion}
f(z)\approx f(0)+f_{z}(0)z+f_{\bar{z}}(0)\bar{z} = f(0)+f_{z}(0)(z+\mu(0)\bar{z}).
\end{equation}
From \eqref{f_expansion}, due to the Beltrami coefficient $\mu$ of $f$, we can see that the nonconformal part of $f$ entirely comes from $S(z) = z+\mu(0)\bar{z}$. $S(z)$ is the map that causes $f$ to map a small circle to a small ellipse. According to $\mu(0)$, we can also determine the angles of the directions of maximal magnification and shrinking. Specifically, the angle of maximal magnification is $\arg(\mu(0))/2$ with magnifying factor $1+|\mu(0)|$; the angle of maximal shrinking is the orthogonal angle $(\arg(\mu(0))-\pi)/2$ with shrinking factor $1-|\mu(0)|$. So the Beltrami coefficient $\mu(z)$ can represent the distortion and measure how far away the quasi-conformal map at each point is deviated from a conformal map (Figure \ref{fig:conformality}).

\begin{figure}[htbp!]
    \centering
    \includegraphics[width=0.4\textwidth]{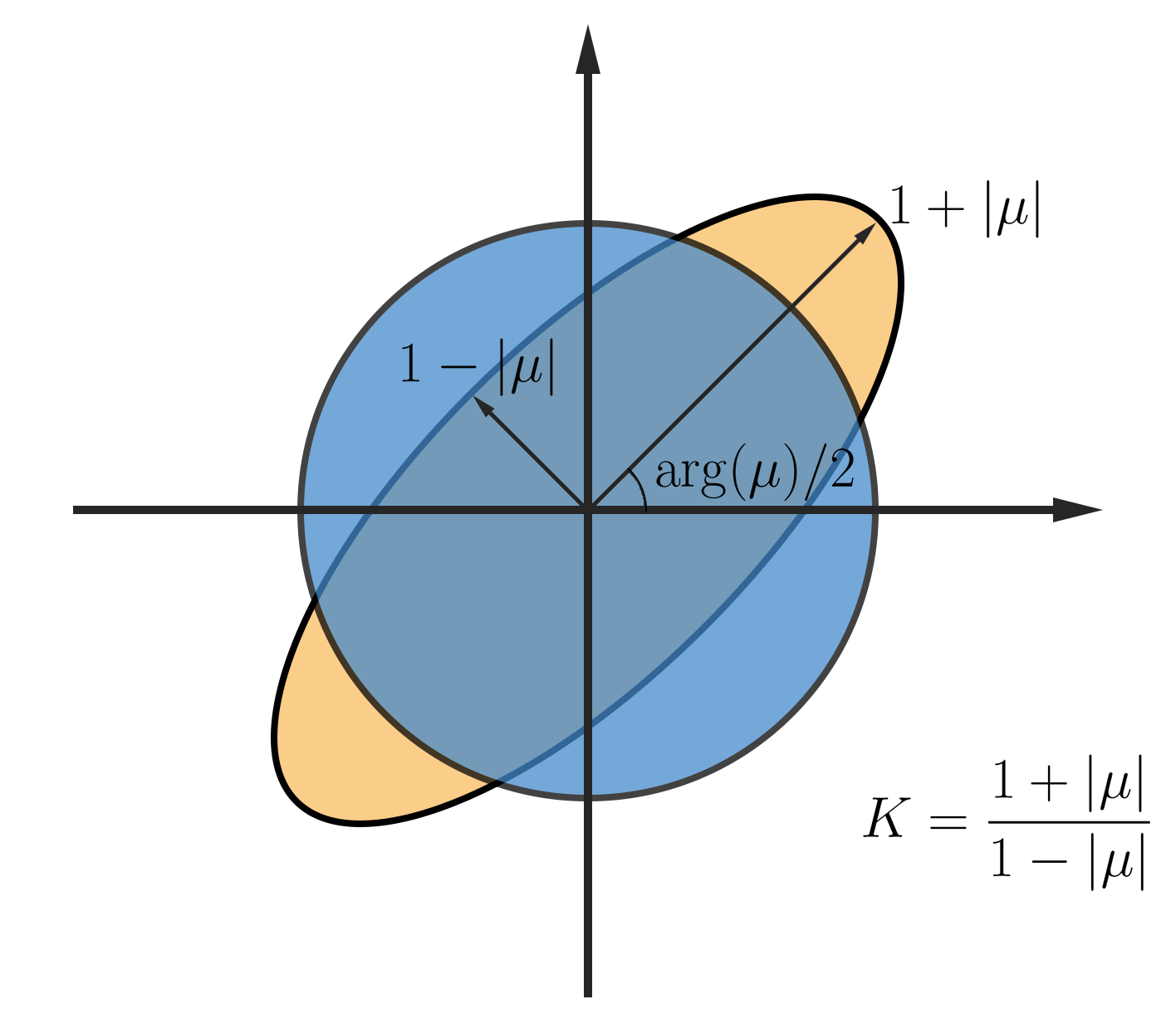}
    \caption{An illustration of the relationship between the conformality distortion and the Beltrami coefficient of a quasi-conformal mapping.}
    \label{fig:conformality}
\end{figure}

Further, denote $\sigma_{1}\geq\sigma_{2}>0$ as the singular values of $\nabla f$. Then by \eqref{compute_BC} again, we have $|\mu(z)|=\frac{\sigma_{1}-\sigma_{2}}{\sigma_{1}+\sigma_{2}}$. We define by $K(z)$ the dilatation 
\begin{equation*}
K(z) = \frac{1+|\mu(z)|}{1-|\mu(z)|}
\end{equation*}
to express the ratio of the largest singular value of $\nabla f$ divided by the smallest one. So $K(z)$ can also measure how far away the quasi-conformal map at each point is deviated from a conformal map. Hence, based on the above discussion, we can derive a diffeomorphic deformation with bounded geometric distortion by restricting $\mu(z)$ or $K(z)$.

For more details about the quasi-conformal theory, please refer to \cite{gardiner2000quasiconformal,lehto1973quasiconformal}.

\subsection{Quasi-conformal registration-based segmentation model}\label{sec:qcsm}
The registration-based segmentation model is to extract the interested region by employing the registration framework. For doing the registration, we require template and reference. We can prescribe a prior image as the template and take the target image as the reference, where the structure of the prior image is related to the target image. 
Ideally, after doing the registration, the boundary of the object in the target image can be linked to the boundary of the object in the prior image by the resulting transformation. 
In addition, if the resulting transformation is one-to-one, a topology-preserving segmentation is guaranteed, which means that the topological structure of the segmentation result is the same with the topological structure of the prior image. 

Denote $I(\bm{x}):\Omega\subset\mathbb{R}^{2}\rightarrow \mathbb{R}$ as the target image. For simplicity, we consider a two-phase segmentation, namely, segmenting the image domain into the foreground and background. In this way, we can define the prior image as
\begin{equation*}
J(c_{1},c_{2},\bm{x}) = c_{1}\mathcal{X}_{D}(\bm{x})+c_{2}\mathcal{X}_{D^{c}}(\bm{x}), 
\end{equation*}
where $D\subset \Omega$ is a mask, $D^{c}$ is the complement of $D$, and 
\begin{equation*}
\mathcal{X}_{D}(\bm{x}) = 
\begin{cases}
1, \quad \bm{x}\in D,\\
0, \quad \mathrm{otherwise},
\end{cases}
\end{equation*}
is the indicator function of $D$. Then, we can build the variational framework of the registration-based segmentation model
\begin{equation*}
\min_{c_{1},c_{2},\varphi(\bm{x})} \frac{1}{2}\int_{\Omega}(I(\bm{x})-J(c_{1},c_{2},\varphi(\bm{x})))^{2}\mathrm{d}\bm{x}+\alpha\mathcal{R}(\varphi(\bm{x})),
\end{equation*}
where $\varphi(\bm{x}):\mathbb{R}^{2}\rightarrow\mathbb{R}^{2}$ is the transformation, $\mathcal{R}(\varphi(\bm{x}))$ is the regularization term to avoid outliers, and $\alpha>0$ is a nonnegative parameter to balance the weight between the fitting term and regularization term. 

There exist many choices for the regularizers, such as the elastic regularizer \cite{broit1981optimal}, the curvature regularizer \cite{chumchob2011fourth,fischer2003curvature,fischer2004unified,ibrahim2015novel} and the fractional-order regularizer \cite{zhang2015variational}. However, the most commonly used regularizers can't ensure a one-to-one transformation because they do not involve the information of the Jacobian determinant of the transformation. Hence, in this paper, to obtain a topology-preserving segmentation model, we mainly consider the following quasi-conformal registration-based segmentation model \cite{chan2018topology}
\begin{equation}\label{eq:qcsm1}
\min_{c_{1},c_{2},\varphi} \frac{1}{2}\int_{\Omega}(I-J(c_{1},c_{2},\varphi))^{2}\mathrm{d}\bm{x}+\alpha_{1}\int_{\Omega}|\Delta\varphi|^{2}\mathrm{d}\bm{x}+\alpha_{2}\int_{\Omega}\phi(|\mu(\varphi)|^{2})\mathrm{d}\bm{x},
\end{equation}
where $\Delta$ is the Laplace operator, $|\mu(\varphi)|^{2}$ is computed by \eqref{compute_BC}, and $\phi(v)$ is a scaling function defined for positive $v$ which will be chosen in Section \ref{sec:numerical}. This scaling function should scale up the value of $v$ when it approaches $1$ and scale down if it is near zero. Through this scaling, the value of $v$ should be regularized to be smaller than $1$ as the energy will be dramatically increased when $v$ goes toward $1$. $\alpha_{1}$ and $\alpha_{2}$ are nonnegative parameters weighting the two regularization terms. We can see that the second term is to control the smoothness and the third term is to restrict the distortion.

\section{Method}\label{sec:method}
In this section, we detail our proposed Quasi-conformal Interactive Segmentation (QIS) model. As an interactive model, our model solves a quasi-conformal registration-based segmentation model \eqref{eq:qcsm1} in each step. By employing the quasi-conformal approach, the mapping for the deformable segmentation model is guaranteed to be bijective, thus preserving the topology. This control over topology helps avoid noise and outliers in the results. Additionally, we provide theoretical analyses that support our model's ability to efficiently incorporate user input. These analyses demonstrate that with each new click, the local region around the click will be accurately included or excluded, ensuring an effective and precise segmentation.

\subsection{Overall model}\label{sec:overallmodel}

\begin{figure}
    \centering
    \includegraphics[width=\textwidth]{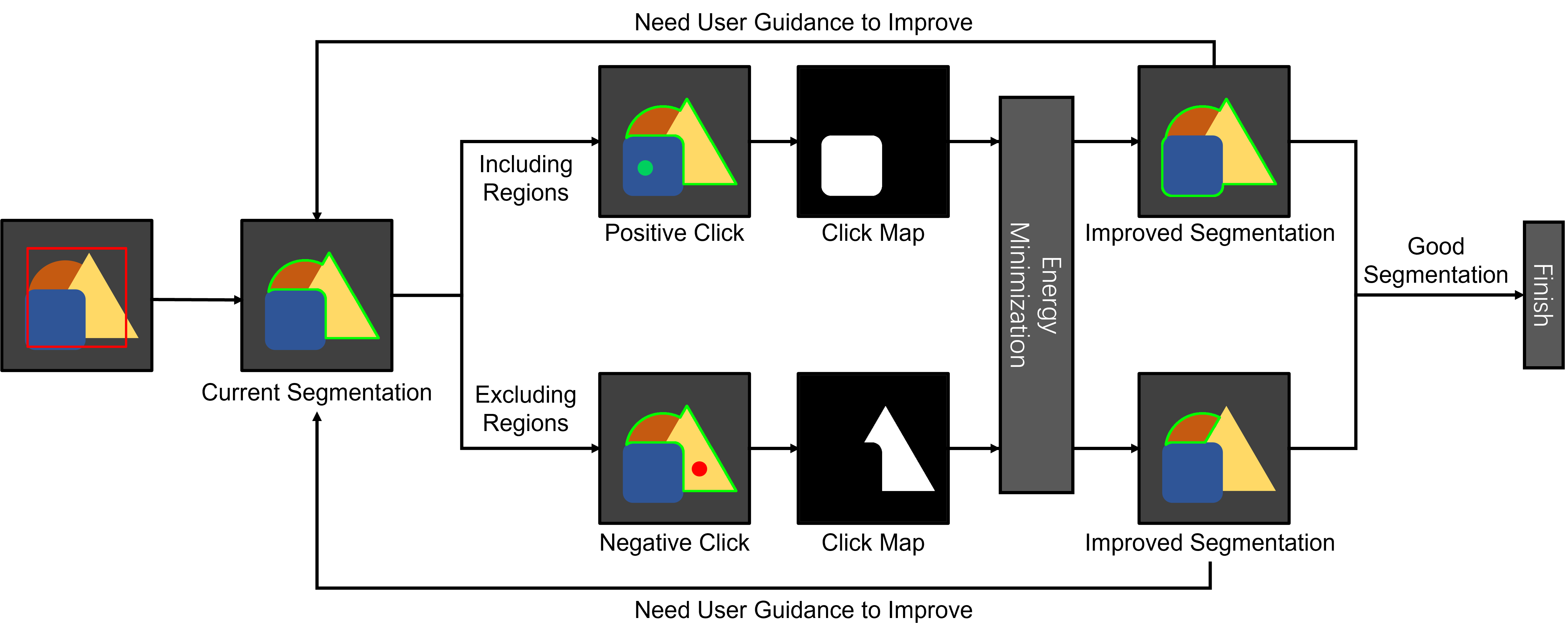}
    \caption{A flow chart illustrating the process of the quasi-conformal interactive segmentation (QIS) model.}
    \label{fig:pipeline}
\end{figure}

In this subsection, we first provide an overview of the interactive segmentation procedure. Figure \ref{fig:pipeline} visually illustrates the process. Initially, the quasi-conformal registration-based segmentation model, described in \eqref{eq:qcsm1}, is applied to obtain an initial segmentation. This initial segmentation serves as the current segmentation mask, which may not be entirely accurate and thus requires further adjustments. To guide these adjustments, the user can indicate regions to be included (using positive clicks) and regions to be excluded (using negative clicks). These clicks are then converted into a click map $M_{\mathrm{clicks}}$, which is subsequently incorporated into the quasi-conformal segmentation model. By minimizing the interactive segmentation energy, a new quasi-conformal map is generated, resulting in an improved segmentation mask. If the improved segmentation is still not satisfactory, the process is repeated until a satisfactory segmentation result is achieved. This iterative procedure, which incorporates the user's input, is referred to as the {\it interactive segmentation process}.

Interactive segmentation involves iteratively updating the segmented region until it encompasses the entire region of interest. As a result, the overall model can be divided into multiple steps, with each step focusing on introducing an appropriate click map $M_{\mathrm{clicks}}$ and segmenting a new image. Therefore, the overall process for our interactive segmentation model can be outlined as follows.

\textbf{Step 0}. Given a initial mask $D$ and parameters $\alpha_1$ and $\alpha_2$, set $I^0 = I$ and compute $\varphi^{0}$ by solving the following model:
\begin{equation*}
\min_{c_{1},c_{2},\varphi}  \frac{1}{2}\int_{\Omega}(I^0 - c_1 \mathcal{X}_{D} \circ \varphi - c_2 \mathcal{X}_{D^c} \circ \varphi)^{2}\mathrm{d}\bm{x}+\alpha_{1}\int_{\Omega}|\Delta\varphi|^{2}\mathrm{d}\bm{x}+\alpha_{2}\int_{\Omega}\phi(|\mu(\varphi)|^{2})\mathrm{d}\bm{x}.
\end{equation*}

\textbf{Step 1}. Give the positive/negative click, compute the click map $M^{1}_{\mathrm{clicks}}$ by \eqref{eq:compute_clickmap}, choose a suitable weight $r^{1}$ by \eqref{eq:choose_positveclick} or \eqref{eq:choose_negativeclick} and set $I^{1} = I^0+r^{1}M^{1}_{\mathrm{clicks}}$. Then given $\varphi^{0}$ as the initial guess, compute $\varphi^{1}$ by solving the following model:
\begin{equation*}
\min_{c_{1},c_{2},\varphi}  \frac{1}{2}\int_{\Omega}(I^{1} - c_1 \mathcal{X}_{D} \circ \varphi - c_2 \mathcal{X}_{D^c} \circ \varphi)^{2}\mathrm{d}\bm{x}+\alpha_{1}\int_{\Omega}|\Delta\varphi|^{2}\mathrm{d}\bm{x}+\alpha_{2}\int_{\Omega}\phi(|\mu(\varphi)|^{2})\mathrm{d}\bm{x}.
\end{equation*}

\begin{center}
$\vdots$
\end{center}

\textbf{Step n}. Give the positive/negative click, compute the click map $M^{n}_{\mathrm{clicks}}$ by \eqref{eq:compute_clickmap}, choose a suitable weight $r^{n}$ by \eqref{eq:choose_positveclick} or \eqref{eq:choose_negativeclick} and set $I^{n} = I^{n-1}+r^{n}M^{n}_{\mathrm{clicks}}$. Then given $\varphi^{n-1}$ as the initial guess, compute $\varphi^{n}$ by solving the following model:
\begin{equation}\label{eq:step-n}
\min_{c_{1},c_{2},\varphi}  \frac{1}{2}\int_{\Omega}(I^{n} - c_1 \mathcal{X}_{D} \circ \varphi - c_2 \mathcal{X}_{D^c} \circ \varphi)^{2}\mathrm{d}\bm{x}+\alpha_{1}\int_{\Omega}|\Delta\varphi|^{2}\mathrm{d}\bm{x}+\alpha_{2}\int_{\Omega}\phi(|\mu(\varphi)|^{2})\mathrm{d}\bm{x}.
\end{equation} 

Note that in each step of the overall model, we solve a quasi-conformal registration-based segmentation model \eqref{eq:qcsm1}. This model can be regarded as the backbone of the whole method. The topology-preserving of the backbone provides great efficiency and helps to segment the images affected by noise or degraded by overexposure or underexposure \cite{zhang2021topology,zhang2021topoconv}. Such advantages brought by the topology-preserving property are further illustrated by the improvement of the segmentation accuracy given in Section \ref{Experiments}. The segmentation results are iteratively refined based on user input, with each click transforming into a click map. This click map, along with a carefully selected parameter 
$r$ (discussed in Section \ref{sec:choiceofr}), is used to update $I^{n-1}$ to $I^n$ as outlined earlier. The segmentation mask at step $n$ is then derived by solving the sub-problem specified in equation \eqref{eq:step-n}. This iterative process continues until the segmentation results are deemed satisfactory.

In the following subsections, we will discuss the definitions of the positive click, negative click, and click the map in detail. Additionally, we will provide a comprehensive explanation of the interactive segmentation fidelity term, which will reveal how to choose the weight $r$.

\subsection{Click map}\label{sec:clickmap}

In this subsection, we introduce the positive click, negative click, and click map, which is essential for constructing our proposed interactive segmentation model.

Mis-segmentation may result in the exclusion of certain regions from the segmentation mask. To address this, users actively participate in the process by interactively delineating points within these excluded regions. These points are referred to as positive clicks. We denote the collection of positive clicks by $C_p$. Similarly, there may be instances where regions that do not belong to the object of interest are mistakenly included in the segmentation mask. To address this, users assist by delineating points within these regions. These points are referred to as negative clicks, as they indicate areas that should be excluded from the final segmentation. The collection of negative clicks is denoted as $C_n$.

The concepts of positive clicks and negative clicks are introduced to minimize the level of user intervention required in identifying specific regions for inclusion or exclusion. From these clicks, a click map can be generated, which effectively captures the local homogeneous regions that are grown around the user's clicks. These local homogeneous regions approximate the regions to be included for positive clicks and the regions to be excluded for negative clicks. This approach ultimately enhances the user experience by streamlining the segmentation process and significantly reducing the manual intervention required.

\begin{figure}
    \centering
    \includegraphics[width=0.8\textwidth]{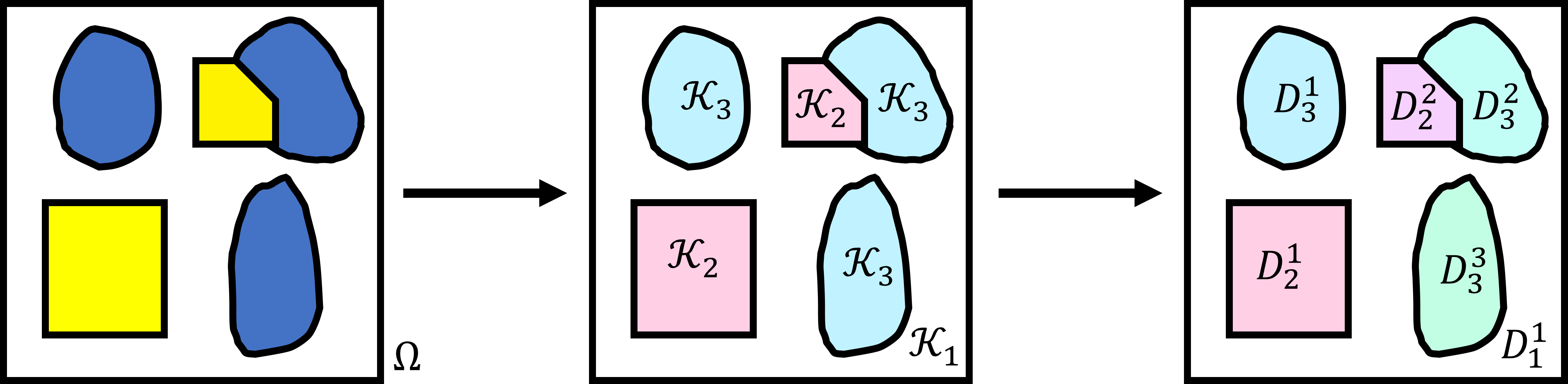}
    \caption{An example demonstrating the division of isolated regions in $\mathcal{K}_i$ into multiple $D^i_j$ for $k=3$. In this example, the image domain is classified into three distinct regions based on pixel values: (1) background white, (2) blue, and (3) yellow. Subsequently, the disconnected regions within each class $\mathcal{K}_i$ are further divided into individual $D^i_j$'s.}
    \label{fig:kmeanclickmap}
\end{figure}

In this work, we propose a methodology to generate the click map from user clicks based on $K$-means clustering. Suppose a collection $\mathcal{C}_*$ of positive clicks or negative clicks is given. In other words, $\mathcal{C}_* = \mathcal{C}_p$ or $\mathcal{C}_n$. Illustrated by Figure~\ref{fig:kmeanclickmap}. the image domain $\Omega$ can be decomposed into $K$ clusters based on the image intensity $I:\Omega\to \mathbb{R}$, using the $K$-means clustering. Note that each cluster $\mathcal{K}_i$, may consist of multiple isolated components. We denote the isolated components by $D^i_j$. Hence, $\mathcal{K}_i = \bigcup_{j=1}^{N_i} D^i_j$, where $N_i$ indicates the number of isolated regions within cluster $i$. For each $\bm{x}_{\text{click}}\in\mathcal{C}_*$, we define $\delta(\bm{x}_{click})$ as follows:
\begin{equation*}\label{clusteringbasedmap}
\begin{aligned}
&\delta(\bm{x}_{click}) = {D^i_j} \text{ for } (i,j) \text{ if } \bm{x}_{click} \in D^i_j.
\end{aligned}
\end{equation*} 
A local homogeneous region surrounding clicks can be obtained by taking the union, which is given by $\mathcal{H} = \bigcup_{\bm{x}_{click} \in\mathcal{C}_*} \delta(\bm{x}_{click})$. Then a click map $M_{\text{click}}$, which is a binary map capturing $\mathcal{H}$, can be defined as follows.

\begin{definition}[Click map]
    The click map $M_{\text{clicks}}$ for clicks $\mathcal{C}_*$ is defined as:
    \begin{equation}\label{eq:compute_clickmap}
        M_{\text{clicks}}(\bm{x}) = 
        \begin{cases}
        1, & \text{if } \bm{x} \in \mathcal{H},\\
        0, & \text{otherwise}.
        \end{cases}
    \end{equation}
\end{definition}

Note that the definition provided above accommodates not only single clicks but also sets of clicks as input. This formulation enables the convenient use of drawing continuous lines instead of individual clicks, which can be directly converted into a collection of densely sampled points along the line. This approach enhances the efficiency of providing guidance. 

\subsection{Fidelity term for interactive segmentation}\label{FidelityTermforInteractiveSegmentation}
\label{sec:InteractiveSegmentationModel}

The click map provides additional information to refine the segmentation prediction.  In this subsection, we will analyze the properties of the fidelity term of the proposed interactive segmentation, which prompts us how to choose the suitable weight $r$. 

\subsubsection{Revisit on the fidelity term of \eqref{eq:qcsm1}}
In this part, we thoroughly analyze the fidelity term of \eqref{eq:qcsm1}. This analysis will facilitate our development of the interactive segmentation model.

For the fitting term of \eqref{eq:qcsm1}, we set 
\begin{equation}
    E(c_{1},c_{2},\varphi) := \int_{\Omega}(I - J(c_{1},c_{2},\varphi))^{2}\mathrm{d}\bm{x} = \int_{\Omega}(I - c_1 \mathcal{X}_{D} \circ \varphi - c_2 \mathcal{X}_{D^c} \circ \varphi)^{2}\mathrm{d}\bm{x}.
    \label{eq:segmentationD}
\end{equation}

Since $\varphi$ is a one-to-one mapping, the deformed region $D$ can be set as a region $G$ and $\mathcal{X}_G = \mathcal{X}_D \circ \varphi$. Hence, \eqref{eq:segmentationD} can be rewritten as
\begin{equation}
    E(c_{1},c_{2},G) = \int_{\Omega}(I - c_1 \mathcal{X}_{G} - c_2 \mathcal{X}_{G^c})^{2}\mathrm{d}\bm{x}.
    \label{eq:segmentationG}    
\end{equation}

A segmentation mask $G$ can divide an image domain into a foreground and a background. However, since the segmentation may be imperfect, there could be segmented regions that are not part of the true region of interest, which we call false positive. Those regions that are classified to be the background but are the actual region of interest are called false negative. To simplify our analysis, we can assume two cases: (1) the case with only false positive and (2) the case with only false negative. Thus, we can assume the segmentation result to have three domains, true positive (the correctly segmented foreground), true negative (the correctly segmented background), and false positive/false negative.

\begin{figure}[htbp]
    \centering
    \includegraphics[width=0.25\textwidth]{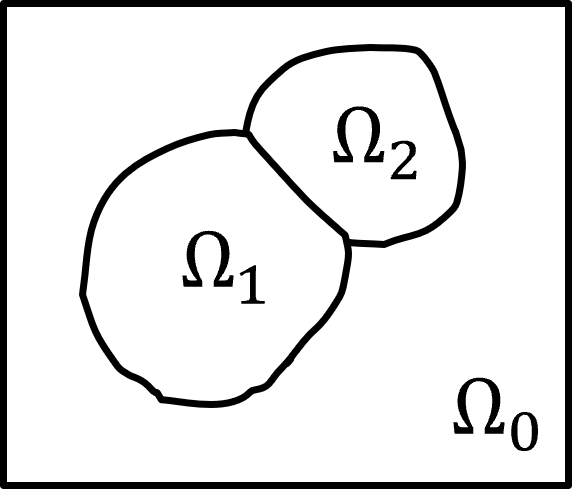}
    \caption{Notations for a three-value image with three domains: $\Omega_{0}$ (background), $\Omega_{1}$ (foreground - wanted), and $\Omega_{2}$ (foreground - unwanted).}
    \label{fig:simpleimage}
\end{figure}

Following these considerations, as Figure \ref{fig:simpleimage}, we set a three-value image $I$ as follows
\begin{equation}
I =
\begin{cases}
p_0, &\bm{x} \in \Omega_0,\\
p_1, &\bm{x} \in \Omega_1,\\
p_2, &\bm{x} \in \Omega_2,\\
\end{cases}    
\label{eq:imagedef}
\end{equation}
where the domain $\Omega_0$ is the background region. Denote the area of some domain $\Omega$ as $||\Omega||$. Then the following theorem indicates that minimizing the segmentation energy \eqref{eq:segmentationG} can only lead to three cases: (1)$E(c_{1},c_{2}, G=\Omega_1)$; (2)$E(c_{1},c_{2}, G=\Omega_2)$; (3)$E(c_{1},c_{2}, G=\Omega_1 \cup \Omega_1)$.

\begin{theorem}\label{theorem_three_value}    
The segmentation energy \eqref{eq:segmentationG} for a three-value image $I$ only has three local minimizes. They are
\begin{enumerate}
    \item $G^* = \Omega_1$;
    \item $G^* = \Omega_2$;
    \item $G^* = \Omega_1 \cup \Omega_2$.
\end{enumerate}
This means that no subset of $\Omega_i$ can be included.
\end{theorem}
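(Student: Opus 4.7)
The plan is to collapse the infinite-dimensional $G$-optimization to a three-variable problem and exploit coordinate-wise concavity. First, I fix $G$ and optimize over $(c_1, c_2)$: the map is a convex quadratic whose unique minimizer is the pair of piecewise means $c_1^\star = \frac{1}{\|G\|}\int_G I$ and $c_2^\star = \frac{1}{\|G^c\|}\int_{G^c} I$. Because $I$ is constant on each $\Omega_i$, both means depend on $G$ only through the overlap areas $a_i := \|G \cap \Omega_i\|$ for $i=0,1,2$, and the resulting reduced energy $\tilde E(a_0,a_1,a_2)$ can be written as a constant (the total second moment $\sum_i \|\Omega_i\| p_i^2$) minus the two-cluster between-class variance. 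The whole problem is therefore equivalent to maximizing this between-class variance over the box $B := [0,\|\Omega_0\|] \times [0,\|\Omega_1\|] \times [0,\|\Omega_2\|]$.

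Next I will prove that $\tilde E$ is concave in each coordinate $a_i$ separately, which is the analytic heart of the argument. A direct differentiation using $\frac{d c_1^\star}{d a_i} = \frac{p_i - c_1^\star}{\sum_j a_j}$ and $\frac{d c_2^\star}{d a_i} = -\frac{p_i - c_2^\star}{\sum_j(\|\Omega_j\| - a_j)}$ collapses to
\begin{equation*}
\frac{\partial^2 \tilde E}{\partial a_i^2} \;=\; -\frac{2(p_i - c_1^\star)^2}{\sum_j a_j} \;-\; \frac{2(p_i - c_2^\star)^2}{\sum_j(\|\Omega_j\| - a_j)} \;\le\; 0,
\end{equation*}
with equality only in the genuinely degenerate situation $c_1^\star = c_2^\star = p_i$. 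Hence every one-dimensional slice of $\tilde E$ through a point of $B$ is concave in its free coordinate, and on each such slice the minimum is attained at an endpoint.

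Separate concavity in each coordinate then forces any local minimum of $\tilde E$ on $B$ to live at a vertex: if some coordinate $a_i^\star$ were in the open interior at a local minimum, the concave 1D slice through $a^\star$ would attain its minimum at an interior point and therefore be locally constant there, so $a_i^\star$ can be pushed to one of the two endpoints without changing the energy; iterating across $i$ produces a vertex of $B$ with the same energy. To finish, I enumerate vertices: the two degenerate cases $G = \emptyset$ and $G = \Omega$ leave one of the means undefined and produce the trivial total-variance value, while the remaining six pair up under the $G \leftrightarrow G^c$ symmetry of $E$ (which simply swaps $c_1^\star$ and $c_2^\star$), so only three distinct configurations survive, and adopting the paper's convention that the foreground $G$ excludes the designated background $\Omega_0$ leaves precisely the three cases $\Omega_1$, $\Omega_2$, and $\Omega_1 \cup \Omega_2$. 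The main obstacle will be the concavity identity: keeping track of the simultaneous variation of $c_1^\star$ and $c_2^\star$ while differentiating $-\tfrac{u^2}{v}-\tfrac{(P-u)^2}{V-v}$ twice in $a_i$, and collecting all cross terms into the clean negative sum above, is where the delicate algebra lives; once that is in hand, the vertex reduction and the enumeration become routine.
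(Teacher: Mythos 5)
Your proposal is correct and follows essentially the same route as the paper's Appendix~A proof: eliminate $(c_1,c_2)$ as piecewise means, reduce the energy to a function of the overlap areas, establish coordinate-wise concavity (the paper phrases this as monotone decrease of $\partial F/\partial B_1$, you as $\partial^2\tilde E/\partial a_i^2\le 0$ via the envelope computation), and push any minimizer to a vertex of the box. If anything, your version is slightly more complete, since you treat the $\Omega_0$-coordinate symmetrically and carry out the vertex enumeration with the $G\leftrightarrow G^c$ symmetry explicitly, steps the paper's proof leaves implicit.
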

\begin{proof}
See Appendix \ref{proof1}.
\end{proof}

Next, we compute the corresponding energy value with respect to these different cases. When $G=\Omega_1$, we have $c_1 = p_1$ and $c_2 = \frac{p_2 ||\Omega_2|| + p_0 ||\Omega_0||}{||\Omega_2|| + ||\Omega_0||}$.
Then, we can get 
\begin{equation}
\begin{aligned}
    E(c_{1},c_{2},G=\Omega_1)
    &= \int_\Omega (I - c_1 \mathcal{X}_{\Omega_1} - c_2 \mathcal{X}_{\Omega_2 \cup \Omega_0})^2 \mathrm{d}\bm{x}\\
    &= (p_0 - c_2)^2 ||\Omega_0|| + (p_2 - c_2)^2 ||\Omega_2||\\    
    &= (p_0 - p_2)^2 \frac{||\Omega_2|| ||\Omega_0||}{||\Omega_2|| + ||\Omega_0||}.
\end{aligned}
\label{eq:E1}
\end{equation}
Similarly, when $G=\Omega_2$, we have $c_1 = p_2$, $c_2 = \frac{p_1 ||\Omega_1|| + p_0 ||\Omega_0||}{||\Omega_1|| + ||\Omega_0||}$, and
\begin{equation}
    E(c_{1},c_{2},G=\Omega_2) = (p_0 - p_1)^2 \frac{||\Omega_1|| ||\Omega_0||}{||\Omega_1|| + ||\Omega_0||}.
\label{eq:E2}
\end{equation}
And for $G = \Omega_1 \cup \Omega_2$, we get $c_1 = \frac{p_1 ||\Omega_1|| + p_2 ||\Omega_2||}{||\Omega_1|| + ||\Omega_2||}$, $c_2 = p_0$, and
\begin{equation}
    E(c_{1},c_{2},G=\Omega_1 \cup \Omega_2) = (p_1 - p_2)^2 \frac{||\Omega_1|| ||\Omega_2||}{||\Omega_1|| + ||\Omega_2||}.
\label{eq:E12}
\end{equation}
Hence, for a clear three-value image, we can optimize the energy functional \eqref{eq:segmentationG} and obtain $G^*$ by comparing the energy values of the above three cases: 
\begin{itemize}
\item $G^* = \Omega_1$, if
\begin{equation}
\begin{cases}
    (p_0-p_2)^2 \frac{||\Omega_2||}{||\Omega_2|| + ||\Omega_0||} \leq (p_0-p_1)^2\frac{||\Omega_1||}{||\Omega_1|| + ||\Omega_0||}, \\
    (p_0-p_2)^2 \frac{||\Omega_0||}{||\Omega_2|| + ||\Omega_0||} \leq (p_1-p_2)^2\frac{||\Omega_1||}{||\Omega_1|| + ||\Omega_2||}.
\end{cases}
\label{eq:omega1}
\end{equation}
\item $G^* = \Omega_2$, if
\begin{equation}
\begin{cases}
    (p_0-p_1)^2 \frac{||\Omega_1||}{||\Omega_1|| + ||\Omega_0||} \leq (p_0-p_2)^2\frac{||\Omega_2||}{||\Omega_2|| + ||\Omega_0||}, \\
    (p_0-p_1)^2 \frac{||\Omega_0||}{||\Omega_1|| + ||\Omega_0||} \leq (p_1-p_2)^2\frac{||\Omega_2||}{||\Omega_1|| + ||\Omega_2||}.
\end{cases}
\label{eq:omega2}
\end{equation}
\item $G^* = \Omega_1 \cup \Omega_2$, if
\begin{equation}
\begin{cases}
    (p_1-p_2)^2 \frac{||\Omega_1||}{||\Omega_1|| + ||\Omega_2||} \leq (p_0-p_2)^2\frac{||\Omega_0||}{||\Omega_2|| + ||\Omega_0||}, \\
    (p_1-p_2)^2 \frac{||\Omega_2||}{||\Omega_1|| + ||\Omega_2||} \leq (p_0-p_1)^2\frac{||\Omega_0||}{||\Omega_1|| + ||\Omega_0||}.
\end{cases}
\label{eq:omega12}
\end{equation}
\end{itemize}

\begin{theorem}\label{theorem_comparisioon}
The minimizer $G^*$ of the segmentation energy \eqref{eq:segmentationG} for a clear three-value image $I$ is one of the following three:
\begin{enumerate}
    \item when \eqref{eq:omega1} is satisfied, $G^* = \Omega_1$;
    \item when \eqref{eq:omega2} is satisfied, $G^* = \Omega_2$;
    \item when \eqref{eq:omega12} is satisfied, $G^* = \Omega_1 \cup \Omega_2$.
\end{enumerate}
\end{theorem}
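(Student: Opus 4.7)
The plan is to reduce this comparison theorem to an immediate consequence of the preceding Theorem~\ref{theorem_three_value} together with the three explicit energy values computed in equations \eqref{eq:E1}, \eqref{eq:E2}, and \eqref{eq:E12}. Since Theorem~\ref{theorem_three_value} already asserts that any local minimizer of \eqref{eq:segmentationG} must be one of $\Omega_1$, $\Omega_2$, or $\Omega_1 \cup \Omega_2$, the global minimizer is whichever one of these three candidates attains the smallest energy. So the proof reduces to a finite comparison among three real numbers, and the three cases in the theorem statement should correspond exactly to the three possible outcomes of that comparison.

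First I would recall Theorem~\ref{theorem_three_value} to restrict the search for $G^*$ to the finite set $\{\Omega_1,\Omega_2,\Omega_1\cup\Omega_2\}$. Next I would invoke the closed-form expressions
\[
E(\Omega_1)=(p_0-p_2)^2\frac{\|\Omega_2\|\,\|\Omega_0\|}{\|\Omega_2\|+\|\Omega_0\|},\quad
E(\Omega_2)=(p_0-p_1)^2\frac{\|\Omega_1\|\,\|\Omega_0\|}{\|\Omega_1\|+\|\Omega_0\|},
\]
\[
E(\Omega_1\cup\Omega_2)=(p_1-p_2)^2\frac{\|\Omega_1\|\,\|\Omega_2\|}{\|\Omega_1\|+\|\Omega_2\|},
\]
already derived immediately before the theorem. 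The global minimizer is $\Omega_1$ iff $E(\Omega_1)\le E(\Omega_2)$ and $E(\Omega_1)\le E(\Omega_1\cup\Omega_2)$, and analogously for the other two cases.

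The heart of the proof is then to show that these two pairwise inequalities reduce exactly to the system \eqref{eq:omega1}, and similarly that $E(\Omega_2)\le\min\{E(\Omega_1),E(\Omega_1\cup\Omega_2)\}$ reduces to \eqref{eq:omega2} and $E(\Omega_1\cup\Omega_2)\le\min\{E(\Omega_1),E(\Omega_2)\}$ reduces to \eqref{eq:omega12}. Concretely, for the first inequality of \eqref{eq:omega1} one would divide both sides of $E(\Omega_1)\le E(\Omega_2)$ by the common positive factor $\|\Omega_0\|$, and for the second one would divide $E(\Omega_1)\le E(\Omega_1\cup\Omega_2)$ by the common positive factor $\|\Omega_2\|$; the remaining two cases follow by the same bookkeeping with the roles of the domains permuted.

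The only subtle point, and the part I would treat most carefully, is simply that each division step is legitimate because $\|\Omega_0\|,\|\Omega_1\|,\|\Omega_2\|>0$ under the standing three-value assumption. There is no genuine analytic obstacle: the theorem is essentially a restatement of the three-way minimum selection, so no optimization or variational argument is required beyond what Theorem~\ref{theorem_three_value} already supplies. I would close by noting that the three conditions are mutually exhaustive (at least one pairwise comparison must hold), so the case analysis covers every image configuration, and that on the boundary of these inequality regions ties occur and multiple global minimizers coexist.
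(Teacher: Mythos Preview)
Your proposal is correct and matches the paper's approach exactly: the paper does not give a separate proof of Theorem~\ref{theorem_comparisioon} but rather states it as the summary of the derivation immediately preceding it, in which the three energy values \eqref{eq:E1}--\eqref{eq:E12} are computed and then pairwise compared (dividing out the common area factors) to obtain \eqref{eq:omega1}--\eqref{eq:omega12}. Your write-up simply makes this comparison step explicit, including the observation that the divisions are licit because all $\|\Omega_i\|>0$.
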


\subsubsection{User's Clicks and Parameter Selection}
\label{sec:choiceofr}
Given a click map $M_{\mathrm{clicks}}: \Omega \longrightarrow \mathbb{R}$, our proposed interactive segmentation fidelity energy is defined as
\begin{equation}
    E_{int}(c_{1},c_{2},G; M_{\mathrm{clicks}}) = \int_\Omega (I + rM_{\mathrm{clicks}} - c_1 \mathcal{X}_{G} - c_2 \mathcal{X}_{G^c})^2 \mathrm{d}\bm{x}.
    \label{eq:interactive}
\end{equation}

With a carefully selected value of $r$ for the provided click map $M_{\mathrm{clicks}}$, the new energy functional \eqref{eq:interactive} is designed to facilitate the inclusion or exclusion of specific regions within the initial segmentation. In the following, we will examine two scenarios. The first scenario involves including the false negative region, which should have been part of the segmentation but was erroneously omitted from the initial segmentation mask. The second scenario involves excluding the false positive region, which is not relevant to the region of interest but was mistakenly included in the initial segmentation mask.

\paragraph{Case 1: Positive click.}
Mathematically, suppose the region of interest is represented by the union of two regions, $\Omega_1 \cup \Omega_2$. However, the initial segmentation only identifies $\Omega_1$, omitting the presence of $\Omega_2$. To rectify this, a positive click is required to indicate that $\Omega_2$ should also be included as part of the region of interest. Therefore, by considering the click map $M_{\mathrm{clicks}}=\mathcal{X}_{\Omega_2}$ and leveraging the interactive segmentation fidelity energy, the parameter $r$ in \eqref{eq:interactive} needs to be carefully selected to ensure that the updated model can generate an optimal segmentation $G^{**} = \Omega_1 \cup \Omega_2$. The following theorem describes how $r$ can be chosen.

\begin{theorem}\label{theorem_positive_click}
Given a three-value image $I$ defined as \eqref{eq:imagedef}, with the initial segmentation $G^* = \Omega_1$ derived by \eqref{eq:segmentationG} and the click map $M_{\mathrm{clicks}}=\mathcal{X}_{\Omega_2}$, we can obtain the desired segmentation $G^{**} = \Omega_1 \cup \Omega_2$ from the interactive segmentation fidelity energy \eqref{eq:interactive} if $r$ satisfies 
\begin{equation}\label{eq:choose_positveclick}
\begin{cases}
    \frac{(p_1-p_2)B - (p_0-p_1)}{B} \leq r \leq \frac{(p_1-p_2)A + (p_0-p_2)}{A+1} \ \text{when $p_0 > p_1$},\\
    \frac{(p_1-p_2)A + (p_0-p_2)}{A+1} \leq r \leq \frac{(p_1-p_2)B - (p_0-p_1)}{B} \ \text{when $p_0 < p_1$},
\end{cases}
\end{equation}
where $A=\sqrt{\frac{||\Omega_1||(||\Omega_2|| + ||\Omega_0||)}{||\Omega_0||(||\Omega_1|| + ||\Omega_2||)}}$ and
$B=\sqrt{\frac{||\Omega_2||(||\Omega_1|| + ||\Omega_0||)}{||\Omega_0||(||\Omega_1|| + ||\Omega_2||)}}$. In practice, $r$ can be chosen as the middle point of the range, i.e. $r= \left(\frac{(p_1-p_2)A + (p_0-p_2)}{A+1}+\frac{(p_1-p_2)B - (p_0-p_1)}{B}\right)/2$.
\end{theorem}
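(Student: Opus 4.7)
The plan is to reduce the theorem to a direct application of Theorem~\ref{theorem_comparisioon}, applied to the \emph{augmented} image
\[
\tilde I(\bm x) := I(\bm x) + r\,M_{\mathrm{clicks}}(\bm x).
\]
Since $M_{\mathrm{clicks}} = \mathcal X_{\Omega_2}$ in this setting, $\tilde I$ remains a three-valued image on the same partition $\Omega_0\cup\Omega_1\cup\Omega_2$, taking values $p_0$, $p_1$, and $\tilde p_2 := p_2 + r$, respectively. Consequently the interactive fidelity energy \eqref{eq:interactive} is literally the energy \eqref{eq:segmentationG} for $\tilde I$, so by Theorem~\ref{theorem_comparisioon} its minimizer is $\Omega_1\cup\Omega_2$ whenever conditions \eqref{eq:omega12} hold with $p_2$ replaced by $\tilde p_2 = p_2 + r$.

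After this reduction the proof becomes purely algebraic. Taking square roots in \eqref{eq:omega12} and collecting the area ratios into the constants $A$ and $B$ defined in the statement, the two inequalities become
\[
A\,|p_1 - p_2 - r| \;\le\; |p_0 - p_2 - r|,
\qquad
B\,|p_1 - p_2 - r| \;\le\; |p_0 - p_1|.
\]
The second inequality is affine in $r$ and, after removing the absolute value according to the sign of $p_0 - p_1$, yields the endpoint involving $B$ in \eqref{eq:choose_positveclick}. For the first inequality I would factor
\[
(p_0 - p_2 - r)^2 - A^2(p_1 - p_2 - r)^2
= \bigl[(p_0 - Ap_1) + (A{-}1)(p_2{+}r)\bigr]\bigl[(p_0 + Ap_1) - (A{+}1)(p_2{+}r)\bigr],
\]
whose relevant root is $r = \bigl(A(p_1 - p_2) + (p_0 - p_2)\bigr)/(A+1)$; a short sign-of-the-derivative check at this root identifies it as an upper endpoint when $p_0 > p_1$ and as a lower endpoint when $p_0 < p_1$, giving the other bound in \eqref{eq:choose_positveclick}.

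The main obstacle is choosing the correct branch of each absolute-value inequality and confirming that the two resulting intervals intersect in the claimed range. A clean way to verify is to substitute the stated endpoints back in: at $r = \bigl(A(p_1 - p_2) + (p_0 - p_2)\bigr)/(A+1)$ one computes $p_1 - p_2 - r = (p_1 - p_0)/(A+1)$ and $p_0 - p_2 - r = A(p_0 - p_1)/(A+1)$, so both sides of the first inequality collapse to $A|p_0 - p_1|/(A+1)$, confirming this is exactly the transition point; an analogous check works for the $B$-endpoint. The case $p_0 < p_1$ then follows by symmetric manipulations with the inequality orientations reversed, accounting for the two branches written in \eqref{eq:choose_positveclick}, and the practical midpoint choice is simply the arithmetic mean of the two admissible endpoints.
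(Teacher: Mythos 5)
Your proposal is correct and follows essentially the same route as the paper's Appendix B: rewrite the interactive energy as the plain fidelity energy \eqref{eq:segmentationG} for the shifted three-value image with $p_2\mapsto p_2+r$, reduce to the two quadratic inequalities in $r$ obtained from \eqref{eq:omega12}, factor them via the constants $A$ and $B$, and identify $\frac{(p_1-p_2)A+(p_0-p_2)}{A+1}$ and $\frac{(p_1-p_2)B-(p_0-p_1)}{B}$ as the binding endpoints. The only part you leave as a sketch --- the branch selection according to whether $A\gtrless 1$ and $p_0\gtrless p_1$, and the ordering of the four candidate endpoints $R_x,R_y,Q_x,Q_y$ that confirms the two solution sets intersect exactly in the claimed interval --- is precisely the case analysis that occupies most of the paper's proof, and your endpoint-substitution check is a valid way to carry it out.
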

\begin{proof}
See Appendix \ref{proof2}.
\end{proof}

\paragraph{Case 2: Negative click.}
When the initial segmentation generated by \eqref{eq:segmentationG} includes an additional region that is not part of the desired segmentation, a negative click is required to remove it. Assume that the region of interest is denoted as $\Omega_1$, but the initial segmentation erroneously includes an extra region $\Omega_2$. By considering the click map $M_{\mathrm{clicks}}=\mathcal{X}_{\Omega_2}$, we need to carefully choose the parameter $r$ in \eqref{eq:interactive} to ensure that the updated model can produce an optimal segmentation $G^{**} = \Omega_1$. The following theorem provides details on how the value of $r$ can be selected.

\begin{theorem}\label{theorem_negative_click}
Given a three-value image $I$ defined as \eqref{eq:imagedef}, with the initial segmentation $G^* = \Omega_1 \cup \Omega_2$ derived from \eqref{eq:segmentationG} and the click map $M_{\mathrm{clicks}}=\mathcal{X}_{\Omega_2}$, we can obtain the desired segmentation $G^{**} = \Omega_1$ from the interactive segmentation fidelity energy \eqref{eq:interactive} if $r$ satisfies 
\begin{equation}\label{eq:choose_negativeclick}
\begin{cases}
    \frac{(p_0-p_2)C + (p_1-p_2)}{C+1} \leq r \leq \frac{(p_0-p_2)E + (p_0-p_1)}{E} \ \text{when $p_0 > p_1$},\\
    \frac{(p_0-p_2)E + (p_0-p_1)}{E} \leq r \leq \frac{(p_0-p_2)C + (p_1-p_2)}{C+1} \ \text{when $p_0 < p_1$},
\end{cases}
\end{equation}
where $C=\sqrt{\frac{||\Omega_0||(||\Omega_2|| + ||\Omega_1||)}{||\Omega_1||(||\Omega_0|| + ||\Omega_2||)}}$ and
$E=\sqrt{\frac{||\Omega_2||(||\Omega_0|| + ||\Omega_1||)}{||\Omega_1||(||\Omega_0|| + ||\Omega_2||)}}$.
In practice, $r$ can be chosen as the middle point of the range, i.e. $r= \left(\frac{(p_0-p_2)C + (p_1-p_2)}{C+1}+\frac{(p_0-p_2)E + (p_0-p_1)}{E}\right)/2$
\end{theorem}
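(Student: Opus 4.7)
The plan is to reduce this to Theorem~\ref{theorem_comparisioon} applied to a modified image. Observe that since $M_{\mathrm{clicks}}=\mathcal{X}_{\Omega_2}$, the function $\tilde{I}:=I+rM_{\mathrm{clicks}}$ is again a three-value image taking values $p_0$, $p_1$, and $\tilde{p}_2:=p_2+r$ on $\Omega_0$, $\Omega_1$, and $\Omega_2$ respectively. Writing $E_{int}(c_1,c_2,G;M_{\mathrm{clicks}}) = \int_\Omega(\tilde{I}-c_1\mathcal{X}_G-c_2\mathcal{X}_{G^c})^2\,d\bm{x}$, we see that minimizing $E_{int}$ is exactly minimizing the original energy \eqref{eq:segmentationG} for $\tilde{I}$. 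Hence, by Theorems~\ref{theorem_three_value} and \ref{theorem_comparisioon}, the minimizer $G^{**}$ equals $\Omega_1$ if and only if the two inequalities in \eqref{eq:omega1} hold with $p_2$ replaced by $\tilde{p}_2$.

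Next I would translate these two inequalities into bounds on $r$. Using the definitions of $C$ and $E$, the first inequality simplifies (after taking square roots) to
\begin{equation*}
E\,|p_0-\tilde{p}_2|\le |p_0-p_1|,
\end{equation*}
which is symmetric in $\tilde{p}_2$ about $p_0$ and therefore determines an interval of the form $p_0-|p_0-p_1|/E\le\tilde{p}_2\le p_0+|p_0-p_1|/E$. Substituting $\tilde{p}_2=p_2+r$ and distinguishing according to the sign of $p_0-p_1$ identifies one endpoint $(E(p_0-p_2)+(p_0-p_1))/E$ as the upper bound when $p_0>p_1$ and as the lower bound when $p_0<p_1$. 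The second inequality similarly reduces to
\begin{equation*}
C\,|p_0-\tilde{p}_2|\le |p_1-\tilde{p}_2|,
\end{equation*}
which, on the segment where $\tilde{p}_2$ lies between $p_0$ and $p_1$, is a one-sided linear constraint yielding $\tilde{p}_2\ge (Cp_0+p_1)/(C+1)$ when $p_0>p_1$ and the reversed inequality when $p_0<p_1$. Subtracting $p_2$ then produces the bound $(C(p_0-p_2)+(p_1-p_2))/(C+1)$ in its correct role on each side.

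Combining both constraints gives exactly the interval \eqref{eq:choose_negativeclick}; the midpoint is clearly inside it, so the practical choice is justified. The main obstacle will be the absolute-value bookkeeping when solving the second inequality, since the sign of $p_1-\tilde{p}_2$ depends on the chosen $r$. I would handle this by first showing that the lower bound from the second inequality together with the upper bound from the first inequality forces $\tilde{p}_2$ into the segment between $p_1$ and $p_0$, so the sign of $p_1-\tilde{p}_2$ is determined and the linearization used to derive the bound is self-consistent. Finally, I would check that the interval is nonempty, which follows from the hypothesis that $G^*=\Omega_1\cup\Omega_2$ is the minimizer of the original energy together with elementary algebraic comparisons of $A/B/C/E$. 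The structure of the argument closely parallels the proof of Theorem~\ref{theorem_positive_click}, with the roles of the target segmentation $\Omega_1$ and its rival $\Omega_1\cup\Omega_2$ interchanged.
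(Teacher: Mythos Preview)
Your proposal is correct and follows essentially the same route as the paper: both reduce $E_{int}$ to the original energy \eqref{eq:segmentationG} on the shifted three-value image $\tilde I$ with $\tilde p_2=p_2+r$, read off the pair of inequalities \eqref{eq:omega1} (equivalently the paper's \eqref{twoinequalities_2}), and solve them for $r$. The paper organizes the algebra by naming four linear endpoints $L_x,L_y,P_x,P_y$ and doing a $C\lessgtr1$ case split to order them, whereas your absolute-value reformulation $E\,|p_0-\tilde p_2|\le|p_0-p_1|$ and $C\,|p_0-\tilde p_2|\le|p_1-\tilde p_2|$ is a tidier packaging of the same computation; the only point to tighten is that your self-consistency argument does not by itself force $\tilde p_2\le p_0$, so you still need the comparison $P_x\le L_y$ (which the paper records in its ordering table) to ensure the full two-sided $E$-inequality holds on the stated interval.
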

\begin{proof}
See Appendix \ref{proof3}.
\end{proof}

Our method is an interactive model that involves human input for each step, which leads to some factors in the whole procedure being undetermined. However, in each step where the clicks are given to be negative or positive, the click map can then be obtained, and so is the parameter $r$. With those components fixed, in each step $n$ of the proposed interactive segmentation model, the following theorem can guarantee the existence of the minimizer.

\begin{theorem}[Existence of minimizer]
For each step $n$ of the proposed interactive segmentation model, suppose $\Omega$ is bounded and simply connected, $I^{n-1}$ is a 
function from $\Omega\subset \mathbb{R}^{n}\rightarrow \mathbb{R}$, $r^{n}$ is chosen by \eqref{eq:choose_positveclick} or \eqref{eq:choose_negativeclick}, $M^{n}_{\mathrm{clicks}}$ is computed by \eqref{eq:compute_clickmap}  and $\alpha_{i}>0, i=1, 2$. Let
\begin{equation*}
\mathcal{A}:=\{\varphi\in \mathcal{C}^{2}(\Omega): \|\varphi\|_{\infty}\leq d_1, \|\nabla\varphi\|_{\infty}\leq d_{2}, \|\nabla\varphi^{2}\|_{\infty}\leq d_{3},\det\nabla\varphi\geq\epsilon\}
\end{equation*} 
for some $d_{i}>0, i=1, 2, 3$ and a small $\epsilon>0$. Then each step $n$ of the proposed interactive segmentation model admits a minimizer in $\mathcal{A}$. In fact, $\mathcal{A}$ is compact.
\end{theorem}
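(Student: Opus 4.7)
The approach is the direct method in the calculus of variations: establish compactness of $\mathcal{A}$, show the functional is lower semicontinuous along convergent subsequences, and combine in the standard way. The first step is to prove $\mathcal{A}$ is sequentially compact. Given $\{\varphi_k\}\subset\mathcal{A}$, the bounds $\|\varphi_k\|_\infty\leq d_1$ and $\|\nabla\varphi_k\|_\infty\leq d_2$ make $\{\varphi_k\}$ equicontinuous while $\|\nabla^2\varphi_k\|_\infty\leq d_3$ makes $\{\nabla\varphi_k\}$ equicontinuous. Arzel\`a--Ascoli on the compact domain $\overline{\Omega}$ then produces a subsequence $\varphi_{k_j}\to\varphi^*$ in $C^1(\overline{\Omega})$. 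The pointwise constraints $\|\varphi^*\|_\infty\leq d_1$, $\|\nabla\varphi^*\|_\infty\leq d_2$, and $\det\nabla\varphi^*\geq\epsilon$ are preserved under uniform convergence, and the bound on $\nabla^2\varphi^*$ is obtained by extracting a further weakly-$*$ $L^\infty$ convergent subsequence of $\{\nabla^2\varphi_{k_j}\}$ and identifying the weak-$*$ limit with $\nabla^2\varphi^*$ distributionally. Hence $\varphi^*\in\mathcal{A}$.

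With compactness in hand, denote the step-$n$ objective by $\mathcal{F}^n$. Since $\mathcal{F}^n\geq 0$ and $\mathcal{A}$ is nonempty (the identity lies in $\mathcal{A}$ for reasonable $d_i$'s), $\inf_\mathcal{A}\mathcal{F}^n$ is finite. Along any minimizing sequence $(c_1^k,c_2^k,\varphi_k)$, the optimal fitting constants satisfy $c_i^k\in[\min I^n,\max I^n]$ and admit convergent subsequences $c_i^k\to c_i^*$, while $\varphi_{k_j}\to\varphi^*\in\mathcal{A}$ in $C^1$ by the previous paragraph. For the Laplacian regularizer $\alpha_1\int_\Omega|\Delta\varphi|^2$, the weak-$*$ $L^\infty$ convergence of $\nabla^2\varphi_{k_j}$ implies weak $L^2$ convergence on the bounded $\Omega$, and weak lower semicontinuity of the $L^2$ norm delivers the liminf inequality. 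For the Beltrami regularizer, the condition $\det\nabla\varphi\geq\epsilon>0$ makes formula \eqref{compute_BC} express $|\mu(\varphi)|^2$ as a continuous function of $\nabla\varphi$ with values bounded away from $1$; hence $\phi(|\mu(\varphi_{k_j})|^2)\to\phi(|\mu(\varphi^*)|^2)$ uniformly on $\overline{\Omega}$ and the integral converges by dominated convergence.

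The main obstacle is the fidelity term, since $\mathcal{X}_D$ is discontinuous across $\partial D$ and $C^1$ convergence of $\varphi_{k_j}$ does not immediately give pointwise convergence of $\mathcal{X}_D\circ\varphi_{k_j}$. The key observation is that each $\varphi_{k_j}$ is a $C^1$ diffeomorphism with Jacobian determinant bounded below by $\epsilon$, so the inverse maps $\varphi_{k_j}^{-1}$ exist with uniformly controlled first derivatives. Consequently the zero-measure set $\varphi_{k_j}^{-1}(\partial D)$ converges in Hausdorff distance to the zero-measure set $(\varphi^*)^{-1}(\partial D)$, and for every $\bm{x}$ outside this limiting set one has $\mathcal{X}_D(\varphi_{k_j}(\bm{x}))\to\mathcal{X}_D(\varphi^*(\bm{x}))$ eventually. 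The integrand of the fidelity term is uniformly bounded by a constant depending on $\|I^n\|_\infty$ and the range of the $c_i^k$'s, so Lebesgue's bounded convergence theorem transfers convergence to the fidelity term. Combining the three estimates yields $\mathcal{F}^n(c_1^*,c_2^*,\varphi^*)\leq\liminf_j \mathcal{F}^n(c_1^{k_j},c_2^{k_j},\varphi_{k_j})=\inf_\mathcal{A}\mathcal{F}^n$, so $(c_1^*,c_2^*,\varphi^*)$ is the required minimizer.
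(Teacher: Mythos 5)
The paper does not actually write out a proof of this theorem: it only remarks that the argument is ``similar to'' Theorem~4.1 of the cited references, which themselves use the direct method of the calculus of variations. Your proposal reconstructs exactly that route --- Arzel\`a--Ascoli compactness of $\mathcal{A}$, convergence of the optimal constants, weak lower semicontinuity of the Laplacian term, continuity of the Beltrami term thanks to $\det\nabla\varphi\geq\epsilon$, and a.e.\ convergence plus bounded convergence for the fidelity term --- so in substance you are supplying the proof the paper omits, and the overall structure is sound.

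Two points need tightening. First, the limit produced by Arzel\`a--Ascoli from the bounds $\|\varphi_k\|_\infty\leq d_1$, $\|\nabla\varphi_k\|_\infty\leq d_2$, $\|\nabla^2\varphi_k\|_\infty\leq d_3$ is only $C^{1}$ with second derivatives in $L^\infty$ (i.e.\ $C^{1,1}=W^{2,\infty}$); it need not be $C^{2}$. Hence $\mathcal{A}$ as literally defined is not closed under the convergence you obtain, and the assertion ``$\mathcal{A}$ is compact'' (which the theorem itself makes) should be read in $W^{2,\infty}$, or the admissible set should be replaced by its closure there; your weak-$*$ identification of the second-derivative bound is the right mechanism, but you should state that the minimizer is guaranteed only in this closure. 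Second, in the fidelity-term argument you assert that each $\varphi_{k_j}$ is a global diffeomorphism and that the preimages $\varphi_{k_j}^{-1}(\partial D)$ converge in Hausdorff distance; neither claim is needed and the first is not justified ($\det\nabla\varphi\geq\epsilon$ gives only local invertibility). What the argument actually requires is (i) $\partial D$ has Lebesgue measure zero (implicit for a reasonable mask, but worth stating), and (ii) $\varphi^*$ is a local diffeomorphism, so $(\varphi^*)^{-1}(\partial D)$ is null; then for every $\bm{x}$ with $\varphi^*(\bm{x})\notin\partial D$ the uniform convergence $\varphi_{k_j}\to\varphi^*$ places $\varphi_{k_j}(\bm{x})$ eventually in the same open component, giving $\mathcal{X}_D\circ\varphi_{k_j}\to\mathcal{X}_D\circ\varphi^*$ a.e., after which bounded convergence finishes the term exactly as you say. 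With these repairs your proof is complete and self-contained, which is more than the paper provides.
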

\begin{proof}
The proof is similar with Theorem 4.1 in \cite{siu2020image} and \cite{zhang2022unifying}.
\end{proof}

\section{Numerical implementation}\label{sec:numerical}
In this section, we give the details of how to solve the proposed interactive model in Section \ref{sec:overallmodel}. 

The main components in each step $n$ of the proposed model are to compute the click map $M^{n}_{\mathrm{clicks}}$, choose the weight $r^{n}$ and solve the following variational problem 
\begin{equation}\label{eq:eachstep_mainproblem}
\min_{c_{1},c_{2},\varphi}  \frac{1}{2}\int_{\Omega}(I^{n} - c_1 \mathcal{X}_{D} \circ \varphi - c_2 \mathcal{X}_{D^c} \circ \varphi)^{2}\mathrm{d}\bm{x}+\alpha_{1}\int_{\Omega}|\Delta\varphi|^{2}\mathrm{d}\bm{x}+\alpha_{2}\int_{\Omega}\phi(|\mu(\varphi)|^{2})\mathrm{d}\bm{x},
\end{equation}
where $I^{n} = I^{n-1}+r^{n}M^{n}_{\mathrm{clicks}}$.
In Section \ref{sec:clickmap} and \ref{FidelityTermforInteractiveSegmentation}, we have provided the approach to calculate the click map $M^{n}_{\mathrm{clicks}}$ and the corresponding $r^{n}$, respectively. The scaling function that used to better penalize large $v = |\mu(\varphi)|$ is chosen as $\phi(v) = \frac{1}{(v-1)^{2}}$ \cite{zhang2018novel}. Next, we mainly investigate the numerical solver for \eqref{eq:eachstep_mainproblem}.

We see that variables $\bm{c} = (c_{1},c_{2})$ and $\varphi$ are coupled in the problem \eqref{eq:eachstep_mainproblem}, then we employ the alternating direction method, namely first fix $\varphi$ to solve $\bm{c}$ then fix $\bm{c}$ to solve $\varphi$. The $k$-th iterative scheme is listed as follows
\begin{equation}\label{iterativescheme}
\left\{
\begin{aligned}
&(c^{k}_{1},c^{k}_{2}) = \mathrm{argmin}_{c_{1},c_{2}} \int_{\Omega}(I^n - c_1 \mathcal{X}_{D} \circ \varphi^{k-1} - c_2 \mathcal{X}_{D^c} \circ \varphi^{k-1})^{2}\mathrm{d}\bm{x}, \\
& \qquad
\begin{aligned}
\varphi^{k} =  \mathrm{argmin}_{\varphi}  & \frac{1}{2}\int_{\Omega}(I^n - c^{k}_1 \mathcal{X}_{D} \circ \varphi - c^{k}_2 \mathcal{X}_{D^c} \circ \varphi)^{2}\mathrm{d}\bm{x} \\
&+\alpha_{1}\int_{\Omega}|\Delta\varphi|^{2}\mathrm{d}\bm{x}+\alpha_{2}\int_{\Omega}\phi(|\mu(\varphi)|^{2})\mathrm{d}\bm{x}.
\end{aligned}
\end{aligned}\right.
\end{equation} 

For the first subproblem in \eqref{iterativescheme}, it has the closed-form solution
\begin{equation}\label{eq:c_subproblem}
c_{1}^{k} = \frac{\int_{G^{k-1}} I \mathrm{d}\bm{x}}{\int_{G^{k-1}}1\mathrm{d}\bm{x}} \quad \text{and} \quad
c_{2}^{k} = \frac{\int_{(G^{k-1})^{c}} I \mathrm{d}\bm{x}}{\int_{(G^{k-1})^{c}}1\mathrm{d}\bm{x}},
\end{equation}
where $G^{k-1} = \{\bm{x}\in \Omega| \varphi^{k-1}(\bm{x})\in D\}$ is the deformed $D$ under the transformation $\varphi^{k-1}$ and $(G^{k-1})^{c}$ is the complement of $G^{k-1}$. 

For the second subproblem in \eqref{iterativescheme}, by setting $J(\varphi) = c^{k}_1 \mathcal{X}_{D} \circ \varphi + c^{k}_2 \mathcal{X}_{D^c} \circ \varphi$, we can rewrite it as the following equivalent problem

 \begin{equation}\label{eq:registrationproblem}
\min_{\varphi}  \frac{1}{2}\int_{\Omega}(I^n - J(\varphi))^{2}\mathrm{d}\bm{x}+\alpha_{1}\int_{\Omega}|\Delta\varphi|^{2}\mathrm{d}\bm{x}+\alpha_{2}\int_{\Omega}\phi(|\mu(\varphi)|^{2})\mathrm{d}\bm{x}.
\end{equation}

Obviously, the problem \eqref{eq:registrationproblem} is a standard image registration problem, which can be solved by the first-discretize-then-optimize method, namely directly discretize the variational model by a proper discretization scheme to derive an unconstrained finite dimensional optimization problem and then choose a suitable optimization algorithm to solve the resulting unconstrained finite dimensional optimization. 

For the discretization, we employ the nodal grid to discretize the fitting term, smooth regularization term and Beltrami regularization term, whose discretized formulation can be found in \cite{zhang2018novel,zhang2022unifying,zhang2021topology}. Set $F(\Psi)$ as the discretized formulation of \eqref{eq:registrationproblem}. To solve the following optimization problem
\begin{equation}\label{optimizationproblem}
\min_{\Psi} F(\Psi),
\end{equation}
we choose the generalized Gauss-Newton method. We first solve the generalized Gauss-Newton equation
\begin{equation}\label{Gauss-Newtonequation}
\hat{H}^{l}p^{l} = -dF^{l},
\end{equation}
where $dF^{l}$ is the gradient of $F(\Psi)$ at $\Psi^{l}$ and $\hat{H}^{l}$ is the symmetric positive definite part of the hessian of $F(\Psi)$ at $\Psi^{l}$, to obtain the search direction $p^{l}$. Then we determine the step length by the Armijo strategy, simultaneously satisfying the energy sufficient descent condition and guaranteeing the Jacobian determinant of the discretized transformation $\Psi^{l+1}$ larger than $0$ \cite{zhang2018novel,zhang2021topology}. The stopping criteria is consistent with \cite{zhang2018novel,zhang2021topology}, namely when the change in the objective function, the norm of the update and the norm of the gradient are all sufficiently small, the iterations are terminated. The algorithm of the generalized Gauss-Newton method to solve \eqref{optimizationproblem} is summarized in Algorithm \ref{alg:generalizedGaussNewton}.

\begin{algorithm}[htbp!]
\caption{Generalized Gauss-Newton method to solve \eqref{optimizationproblem}.}
\label{alg:generalizedGaussNewton}
\begin{algorithmic}
\STATE{Give the discretized transformation $\Psi^{1}$ such that the Jacobian determinant of  $\Psi^{1}$ is larger than $0$;}
\FOR{$l=1:\text{MaxGN}$}
\STATE{solve \eqref{Gauss-Newtonequation} to obtain the search direction $p^{l}$;}
\STATE{Update $\Psi^{l+1}$ by Armijo Strategy;}
\IF{stopping criteria is satisfied}
\STATE{berak;}
\ENDIF
\STATE{Compute $\hat{H}^{l+1}$ and $dF^{l+1}$;}
\ENDFOR
\end{algorithmic}
\end{algorithm}

Following the proof of Theorem 5 in \cite{zhang2021topology}, we have the following convergence theorem for Algorithm \ref{alg:generalizedGaussNewton}.

\begin{theorem}
For the resulting finite-dimensional optimization problem \eqref{optimizationproblem}, given $\Psi^{1}$ satisfying the Jacobian determinant of $\Psi^{1}$ larger than $0$, the sequence $\{\Psi^{k}\}_{k\in\mathbb{N}}$ generated by Algorithm \ref{alg:generalizedGaussNewton} from $\Psi^{1}$ admits a subsequence that converges to a critical point $\Psi^{*}$ of $F$ and the Jacobian determinant of $\Psi^{*}$ larger than $0$.
\end{theorem}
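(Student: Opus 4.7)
The plan is to follow the blueprint of Theorem 5 in \cite{zhang2021topology} and organize the argument into three stages: (i) monotone decrease and a uniform positive lower bound on $\det\nabla\Psi^{l}$ produced by the Armijo strategy, (ii) compactness of the sublevel set to extract a convergent subsequence, and (iii) passage to the limit in the Gauss--Newton equation to identify the limit as a critical point.

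First, I would spell out the structural properties of $F$. After nodal discretization, $F(\Psi)$ is a finite sum of the fitting term, the discrete $|\Delta\Psi|^{2}$ Laplacian term, and the discrete Beltrami term built from $\phi(v)=1/(v-1)^{2}$ with $v=|\mu(\Psi)|^{2}\in[0,1)$. The crucial observation is that $\phi(v)\to +\infty$ as $v\to 1^{-}$, and by \eqref{compute_BC} this is equivalent to $\det\nabla\Psi\to 0^{+}$ for an orientation-preserving $\Psi$. Hence any sublevel set $\mathcal{S}_{c}=\{\Psi:F(\Psi)\leq c,\ \det\nabla\Psi>0\}$ forces $\det\nabla\Psi\geq \epsilon_{c}>0$ pointwise (on the discrete grid) for some $\epsilon_{c}$ depending only on $c$. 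The discrete Laplacian term, together with the boundary conditions built into the nodal discretization, controls $\|\Psi\|$ up to constants, so $\mathcal{S}_{c}$ is bounded in $\mathbb{R}^{N}$ and its closure in $\{\det\nabla\Psi\geq\epsilon_{c}\}$ is compact. This is the finite-dimensional analogue of the compactness of $\mathcal{A}$ invoked in the existence theorem.

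Next, I would exploit the Armijo update. By construction the step length is chosen simultaneously to (a) satisfy the sufficient descent inequality $F(\Psi^{l+1})\leq F(\Psi^{l})+\sigma \tau_{l}\,(dF^{l})^{\top}p^{l}$ and (b) guarantee $\det\nabla\Psi^{l+1}>0$ at every grid point; see \cite{zhang2018novel,zhang2021topology}. Since $\hat H^{l}$ is the symmetric positive-definite part of the Hessian, $p^{l}=-(\hat H^{l})^{-1}dF^{l}$ is a descent direction and the quadratic form $(dF^{l})^{\top}p^{l}\leq -\lambda_{\min}(\hat H^{l})^{-1}\|dF^{l}\|^{2}$. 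Monotone decrease of $\{F(\Psi^{l})\}$ keeps every iterate in $\mathcal{S}_{F(\Psi^{1})}$, so all $\Psi^{l}$ lie in a compact set on which $\hat H^{l}$ has a uniform positive spectral bound (by continuity of the discrete Hessian). Combining with $F\geq 0$ yields $\sum_{l}\tau_{l}\|dF^{l}\|^{2}<\infty$, and the standard Armijo argument (together with a uniform lower bound on admissible step lengths on the compact set, obtained from the smoothness of $F$ there) forces $\liminf_{l}\|dF^{l}\|=0$.

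Finally, compactness of $\mathcal{S}_{F(\Psi^{1})}$ lets me extract a subsequence $\Psi^{l_{k}}\to\Psi^{*}$ along which $\|dF^{l_{k}}\|\to 0$. Continuity of $dF$ on the compact sublevel set gives $dF(\Psi^{*})=0$, so $\Psi^{*}$ is a critical point of $F$. The same compactness keeps $\det\nabla\Psi^{l_{k}}\geq\epsilon_{F(\Psi^{1})}$ at every grid point, and passing to the limit yields $\det\nabla\Psi^{*}>0$, completing the statement. The main technical obstacle will be the Armijo lower bound on $\tau_{l}$: because the Beltrami barrier $\phi$ is steep near $|\mu|=1$, one must argue that backtracking terminates uniformly on $\mathcal{S}_{F(\Psi^{1})}$, i.e., that the iterates cannot be pushed arbitrarily close to the singular locus $\det\nabla\Psi=0$. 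This is handled exactly as in \cite{zhang2018novel,zhang2021topology} by using the barrier property of $\phi$ to show that any step violating $\det\nabla\Psi>0$ would increase $F$ beyond $F(\Psi^{1})$, contradicting sufficient descent; once this is in place, the rest of the argument is a routine Gauss--Newton convergence proof.
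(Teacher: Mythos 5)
Your overall architecture (Armijo descent, compactness of a sublevel set, extraction of a convergent subsequence, passage to the limit in the gradient) is exactly the route the paper points to --- the paper gives no details at all and simply defers to Theorem 5 of \cite{zhang2021topology}, so in that sense you are following the same approach. However, the specific mechanism you propose for the crucial step contains a genuine error. You claim that $\phi(v)\to+\infty$ as $v=|\mu|^{2}\to 1^{-}$ is ``equivalent to $\det\nabla\Psi\to 0^{+}$,'' and you later reuse this to argue that any step driving $\det\nabla\Psi$ toward zero would blow up $F$ past $F(\Psi^{1})$. This equivalence is false. From \eqref{compute_BC}, $|\mu|=\frac{\sigma_{1}-\sigma_{2}}{\sigma_{1}+\sigma_{2}}$ depends only on the \emph{ratio} of the singular values of $\nabla\Psi$, not on their size: the isotropic degeneration $\nabla\Psi=\epsilon I$ has $\mu=0$ and $\phi(|\mu|^{2})=\phi(0)=1$ bounded, while $\det\nabla\Psi=\epsilon^{2}\to 0$. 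The Laplacian term does not exclude such collapse either (it vanishes on affine maps). Consequently the sublevel set $\mathcal{S}_{c}$ is \emph{not} contained in $\{\det\nabla\Psi\geq\epsilon_{c}\}$ for any $\epsilon_{c}>0$, your step (i) does not deliver a uniform positive lower bound on the Jacobian determinant, and the concluding claim $\det\nabla\Psi^{*}>0$ does not follow from the barrier property of $\phi$.

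The repair is to lean on the other safeguard the paper actually builds into the algorithm: the Armijo strategy is required to \emph{simultaneously} satisfy sufficient descent and keep the Jacobian determinant of $\Psi^{l+1}$ positive at every grid point, and the argument of Theorem 5 in \cite{zhang2021topology} (and \cite{zhang2018novel}) obtains the strict positivity of $\det\nabla\Psi^{*}$ from that explicit determinant constraint in the line search rather than from the Beltrami regularizer alone. Your descent and subsequence-extraction machinery is otherwise standard and sound once boundedness of the iterates is established by some means other than the (non-coercive in $\det$) Beltrami barrier; you should also be explicit that closedness of the iterate set away from $\{\det\nabla\Psi=0\}$ is exactly the point that requires the modified line search, since without it the limit could a priori sit on the singular locus.
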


Now we are ready to give Algorithm \ref{alg:algorithmalternatingdirection} to solve the problem \eqref{eq:eachstep_mainproblem}.

\begin{algorithm}[htbp!]
\caption{Alternating direction method to solve \eqref{eq:eachstep_mainproblem}.}
\label{alg:algorithmalternatingdirection}
\begin{algorithmic}
\STATE{Compute $M^{n}_{\mathrm{clicks}}$ by \eqref{eq:compute_clickmap} and $r^{n}$ by \eqref{eq:choose_positveclick} or \eqref{eq:choose_negativeclick}. Input target image $I^n = I^{n-1}+r^nM^{n}_{\mathrm{clicks}}$ and initial mask $D$ and set parameters $\alpha_{1}$ and $\alpha_{2}$. Give the discretized transformation $\Psi^{1}$ such that the Jacobian determinant of $\Psi^{1}$ is larger than $0$;}
\FOR{$k=1:\text{MaxAD}$}
\STATE{Update $c^{k+1}_{1},c^{k+1}_{2}$ by \eqref{eq:c_subproblem};}
\STATE{Update $\Psi^{k+1}$ by Algorithm \ref{alg:generalizedGaussNewton};}
\ENDFOR
\end{algorithmic}
\end{algorithm}

To further speed up Algorithm \ref{alg:algorithmalternatingdirection}, the multilevel strategy is often used in the implementation \cite{zhang2018novel,zhang2022unifying,zhang2021topology}. Firstly, we coarsen the target image $I^{n}$ by some levels. Then we can obtain a solution by solving the problem \eqref{eq:eachstep_mainproblem} on the coarsest level. Next, we interpolate the solution to the finer level as the initial guess for the next level. We repeat this process and get the final segmentation result on the finest level. The most important advantage of this strategy is that it can save computational time to provide a good initial guess for the finer level because there are fewer variables on the coarser level. Also, it can help to avoid to trapping into a local minimum since the coarser level only shows the main features and patterns.

\section{Experiments}\label{Experiments}

We have conducted extensive experiments on synthetic and real images to evaluate the performance of our proposed Quasi-conformal Interactive Segmentation (QIS) model. In this section, we will report the experimental results. All implementations were executed using Matlab R2022b on a Windows 11 x64 platform with a 3.20 GHz AMD Ryzen 5800H processor and 16 GB RAM. In our experiments, all images were resized into $512 \times 512$ and their intensities were rescaled into $[0, 255]$. The weight for the Laplacian term is $\alpha_1 = 0.001$, and the weight for the regularization of the Beltrami coefficients is $\alpha_2 = 100$. The implementation for the Chan-Vese model is from \cite{wu2024cv} with default parameters while the GrabCut is the built-in function of Matlab.

\subsection{Interactive segmentation of synthesized images} 
\begin{figure}[ht!]
    \centering
    \includegraphics[width=0.6\textwidth]{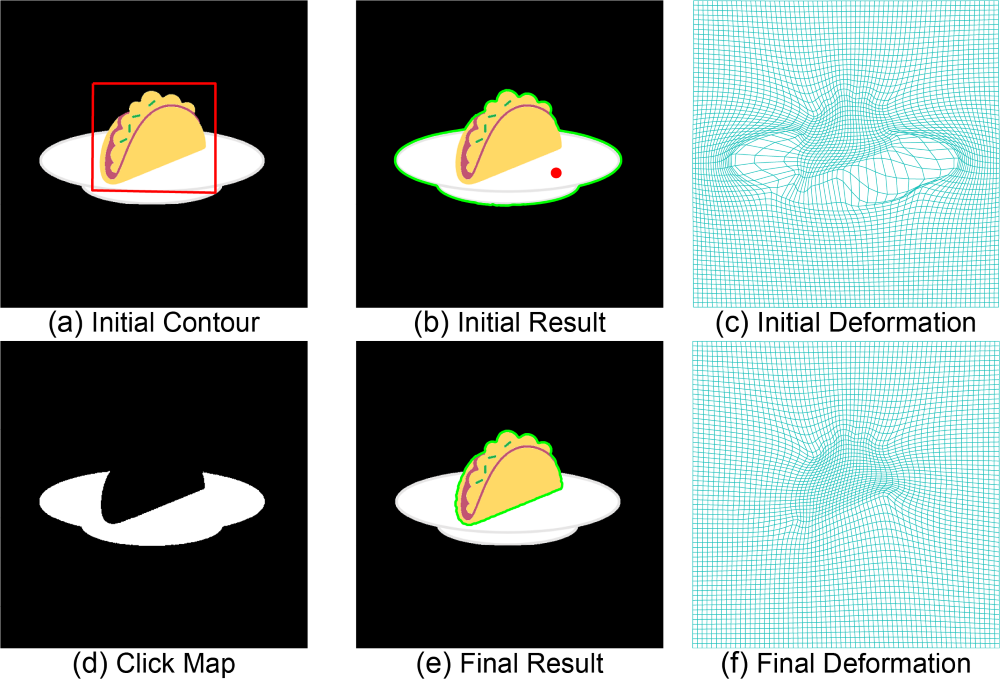}
    \caption{Segmentation result achieved by employing the QIS model on a synthesized image featuring a taco on a plate, with the objective of segmenting the taco.}
    \label{fig:clickcluster1}
\end{figure}
\begin{figure}[ht!]
    \centering
    \includegraphics[width=0.6\textwidth]{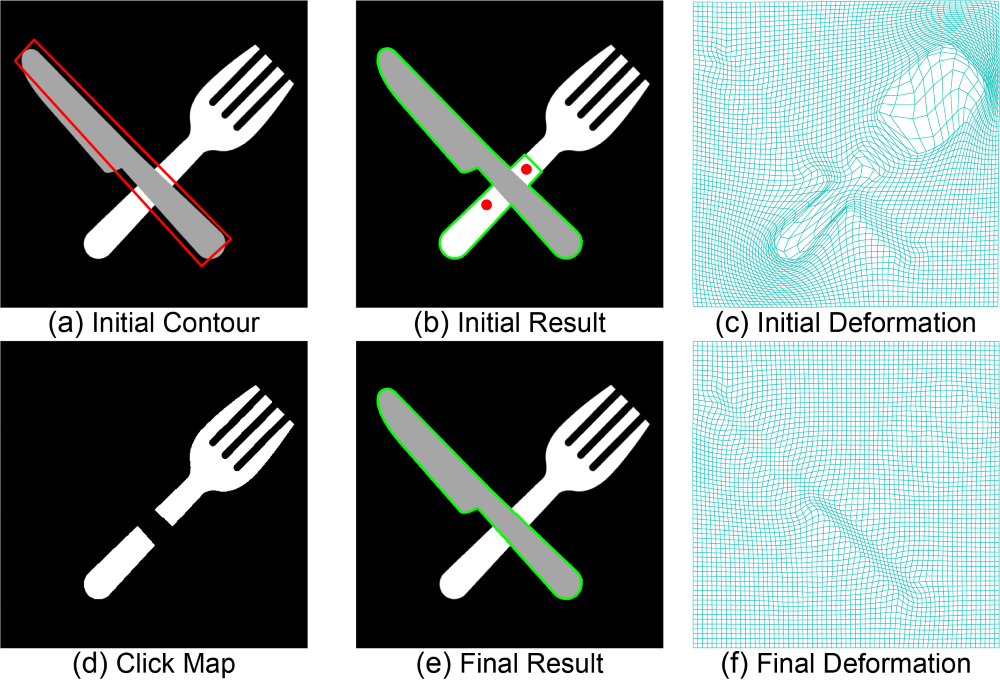}
    \caption{Segmentation result attained by utilizing the QIS model on a synthesized image highlighting the eating utensils, specifically the knife and fork, with the aim of segmenting the knife.}
    \label{fig:clickcluster2}
\end{figure}

We first evaluate our proposed model on simple synthetic images to provide a fundamental understanding of its functionality. To segment the taco in Figure \ref{fig:clickcluster1}, the model operates as follows. The user provides an initial contour (a) to approximate the region of interest and establish the desired topology of the segmentation mask. As described in Section \ref{sec:overallmodel}, the initial segmentation (b) is obtained by deforming the initial contour using the mapping shown in (c). This initial segmentation includes both the taco and the plate. To exclude the plate, the user inputs a negative click (indicated by the red point in (b)). Using our proposed method for computing click maps, detailed in Section \ref{sec:clickmap}, we obtain the updated click map in (d). With this click map and the previous segmentation result, we solve the interactive segmentation energy term \eqref{eq:step-n}. The final result is refined and improved as shown in (e), using the quasi-conformal mapping illustrated in (f).

The second example, illustrated in Figure \ref{fig:clickcluster2}, involves segmenting two eating utensils: a fork and a knife. Following the same procedures, the initial contour, initial segmentation, and associated deformation mapping are provided in (a), (b), and (c), respectively. However, since the regions to be excluded are interrupted by the knife, two negative clicks are required because each click can only indicate one connected region by the click map algorithm used (Section \ref{sec:clickmap}). Therefore, the user must input two clicks to properly exclude both disconnected regions. The computed click map is presented in (d). The final segmentation result and its associated quasi-conformal mapping are shown in (e) and (f), respectively.

\subsection{Interactive segmentation of medical images}
In this subsection, we evaluate our proposed segmentation model on real medical images. More specifically, we use brain magnetic resonance images (MRIs) from the BraTs21\cite{baid2021rsna} dataset in our experiment.

\begin{figure}[ht!]
    \centering
    \includegraphics[width=0.6\textwidth]{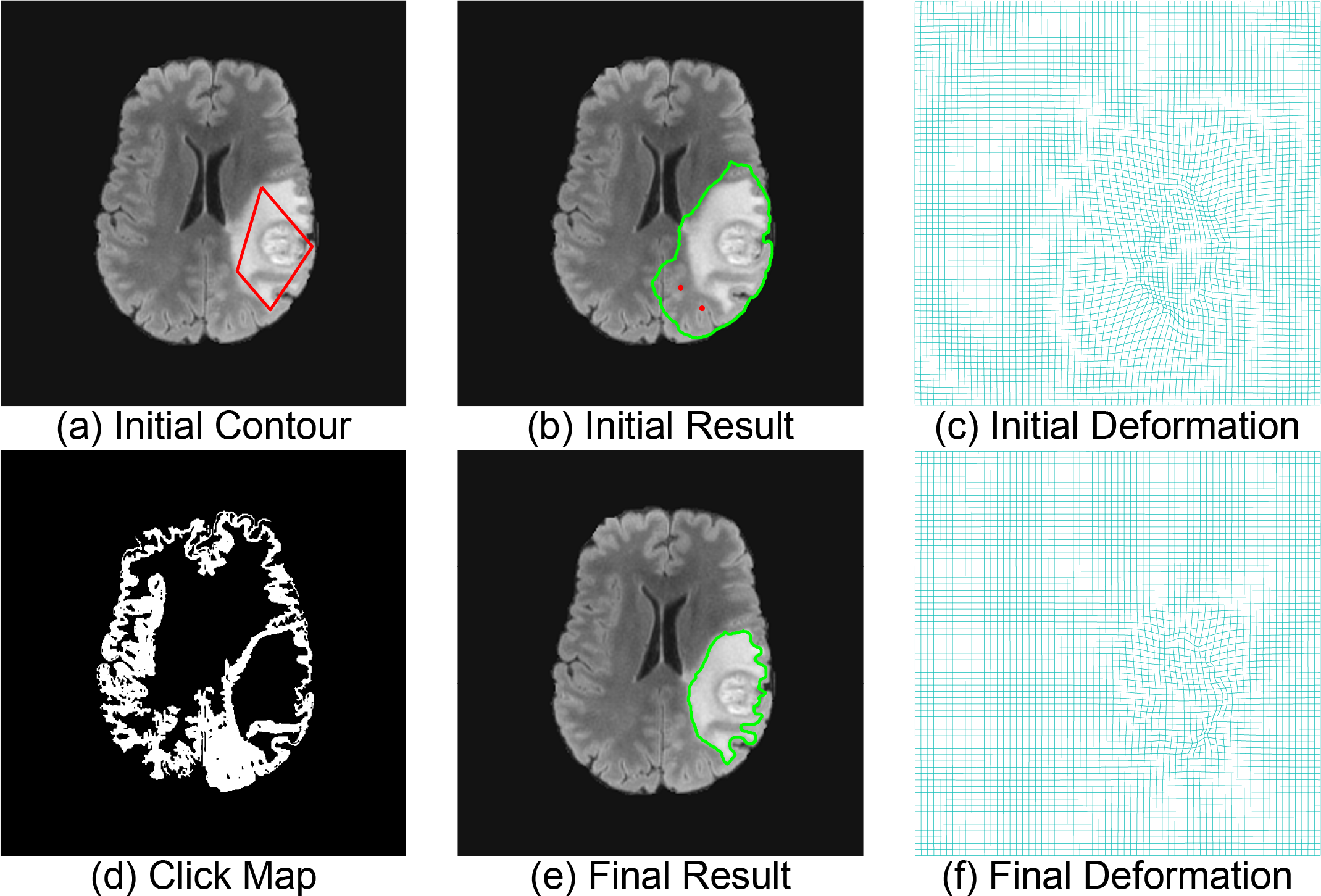}
    \caption{Segmentation result achieved by employing the QIS model on a brain MRI, with the goal of extracting the lesion region.}
    \label{fig:BISmed1}
\end{figure}
\begin{figure}[ht!]
    \centering
    \includegraphics[width=0.6\textwidth]{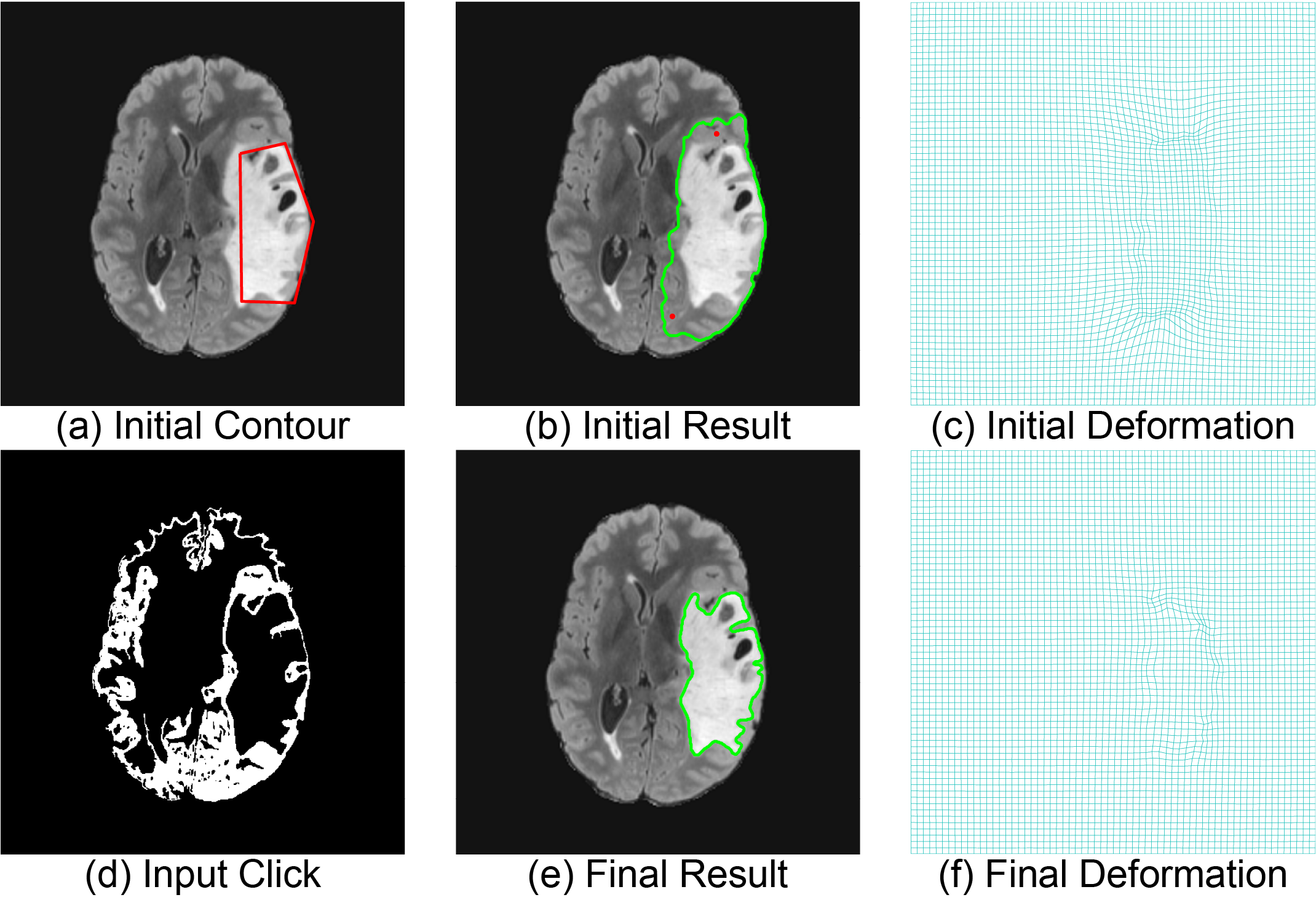}
    \caption{Another example of a segmentation result obtained by utilizing the QIS model on a different brain MRI, aiming to extract the lesion region.}
    \label{fig:BISmed2}
\end{figure}

Figure \ref{fig:BISmed1} shows an MRI of a brain with a lesion. Our objective is to segment the lesion area. Following the layout of the previous experiment, we illustrate the initial contour in (a). The initial segmentation and quasi-conformal mapping are shown in (b) and (c). From the image, it is observed that the lesion area is not connected. However, due to the topology-preserving property of the quasi-conformal mapping and the prescribed topology in the initial contour, the segmentation result is a simply-connected region. By applying two negative clicks and using the computed click map (d), we achieve the final segmentation result in (e) and its associated quasi-conformal mapping in (f).

Another example in Figure \ref{fig:BISmed2} depicts a more complex scenario with multiple holes inside the lesion regions. Those abnormal voids inside the area add to the segmentation complexity, often causing other methods to fail to include them. However, our method demonstrates robustness. By appropriately defining the initial contour and utilizing the topology-preserving features of the quasi-conformal mapping, our method effectively encloses a simply-connected lesion region that includes these abnormal areas, as shown in (e) and its associated mapping in (f).

\begin{figure}[ht!]
    \centering
    \includegraphics[width=0.6\textwidth]{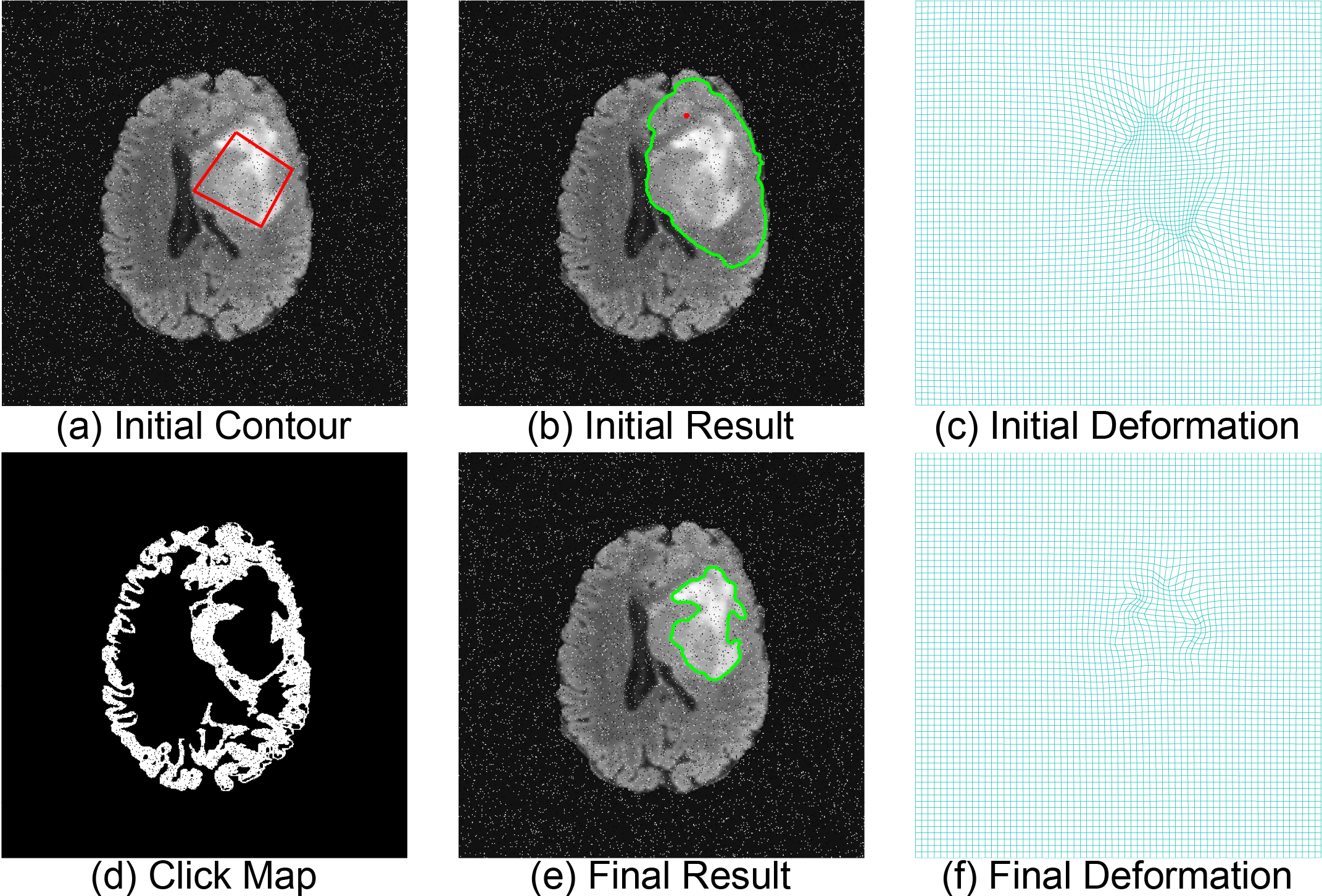}
    \caption{Segmentation result obtained by applying the proposed QIS model to extract the lesion region from a noisy brain MRI.}
    \label{fig:BISnoisy1}
\end{figure}
\begin{figure}[ht!]
    \centering
    \includegraphics[width=0.6\textwidth]{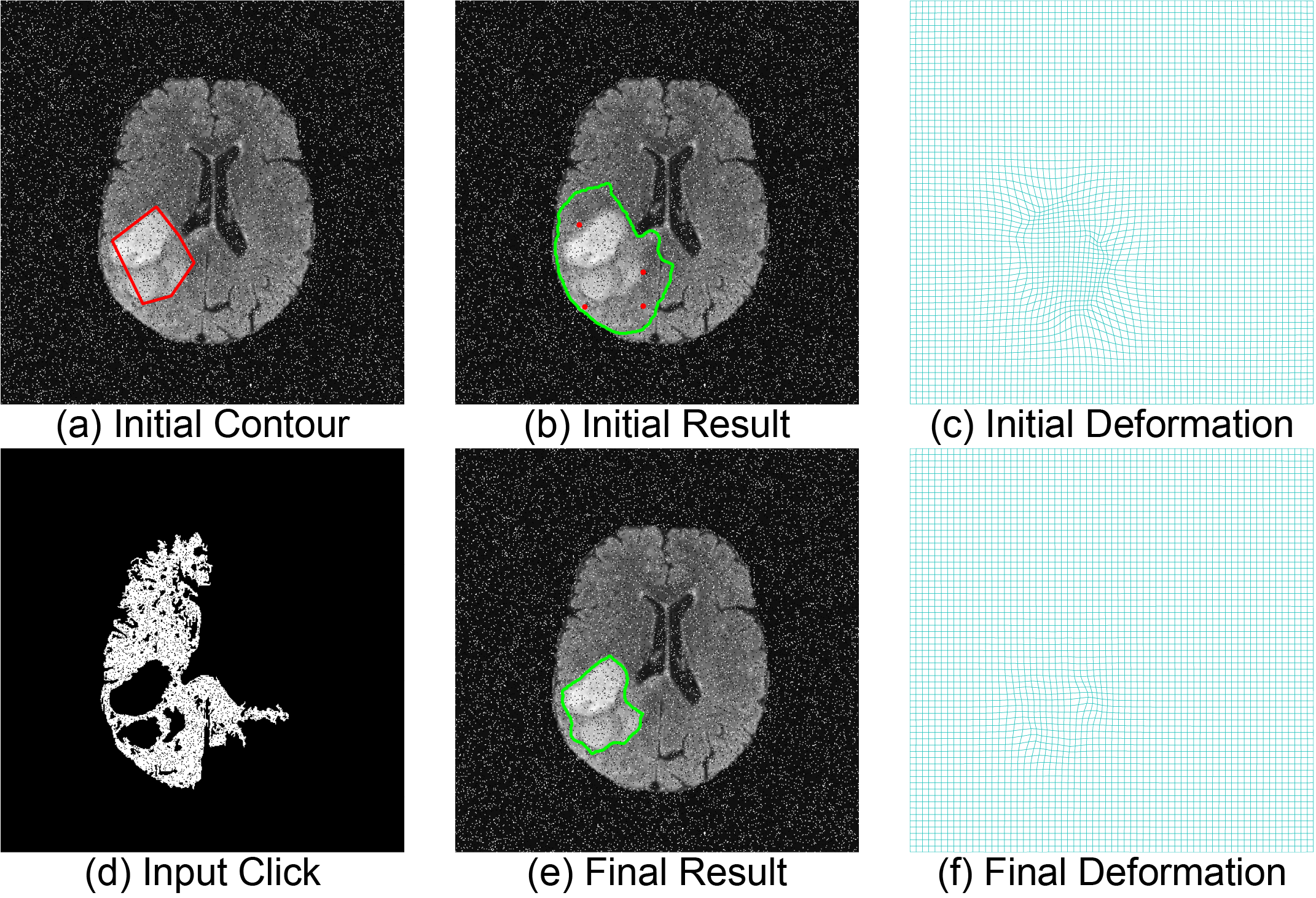}
    \caption{Another instance of a segmentation result obtained by applying the proposed QIS model to extract the lesion region from a different noisy brain MRI.}
    \label{fig:BISnoisy2}
\end{figure}

Figure \ref{fig:BISnoisy1} and Figure \ref{fig:BISnoisy2} show the even more challenging cases, which is to segment the lesion regions under brain MRIs with heavy noise. Following the same procedures, in this task, our methods successfully enclose the lesion region with user guidance. The result region is continuous and free of any outliers, corresponding to the prescribed topology from the initial contour input by the user. The user input clicks are effectively incorporated into the model to provide guidance for segmenting the target region. The deformation mappings are all bijective, revealing the topology-preserving feature of our interactive segmentation model.

\subsection{Comprehensive analysis of QIS model}
In this subsection, we perform a more comprehensive analysis of our proposed QIS model.

\paragraph{Comparison with interactive Chan-Vese segmentation model}

The QIS model is a deformation-based segmentation model incorporating the quasiconformal segmentation model with the click map. One advantage of the quasiconformal segmentation model is that it ensures the segmentation mask maintains the prescribed topology, avoiding topological errors and enhancing the accuracy and efficiency of the interactive segmentation model. To investigate the importance of incorporating the quasiconformal segmentation model, we compare our method with an alternative method that integrates the level-set segmentation model with the click map. Instead of utilizing the quasiconformal segmentation model, we can run the Chan-Vese segmentation model with $I^{n-1} + r^{n} M_{clicks}^{n}$ as the input image using a suitable parameter $r^{n}$, which we will refer to as interactive Chan-Vese model here after. The segmentation results of several natural images using the interactive Chan-Vese model are presented in Figure \ref{fig:BISnatCV}. The first row of Figure \ref{fig:BISnatCV} illustrates the initial segmentation outcomes of four natural images using the conventional Chan-Vese segmentation model. The two images on the left are clean, while the two on the right are noisy. The second row displays the initial segmentation results of the same four natural images using the quasiconformal segmentation model. In both methods, the initial segmentation masks include some undesired regions and exclude certain regions of interest. Some positive clicks (in blue) and negative clicks (in red) are introduced to guide the segmentation. The third row shows the segmentation results obtained by the interactive Chan-Vese segmentation method, incorporating click maps. The segmentation masks exhibit improved accuracy in segmenting the regions of interest, though they are not entirely precise. Finally, the last row demonstrates the segmentation results obtained by the QIS model, which accurately segments the regions of interest. The topology of the segmentation mask is also consistent with that of the initial contour. Note that the clicks inputted by users are the same for both the interactive Chan-Vese segmentation model and the QIS model.

Figure \ref{fig:BISmedCV} presents another comparison using brain MRIs. The first row of Figure \ref{fig:BISmedCV} shows the initial segmentation results of four brain MRIs using the conventional Chan-Vese segmentation model. Again, the two images on the left are clean, while the two on the right are noisy. Evidently, the initial segmentation masks contain some undesired regions and exclude some regions of interest. The second row shows the segmentation results obtained by the interactive Chan-Vese segmentation method, incorporating click maps. The clicks inputted by the users are the same as those used by the QIS model, as shown in Figure \ref{fig:BISmed1}, Figure \ref{fig:BISmed2}, Figure \ref{fig:BISnoisy1} and Figure \ref{fig:BISnoisy2}. The segmentation masks can better segment the lesion, although they are still not accurate. The last row shows the segmentation results obtained by the QIS model, which accurately segments the lesions. The topology of the segmentation mask is consistent with that of the initial contour. 

\begin{figure}[ht!]
    \centering
    \includegraphics[width=\textwidth]{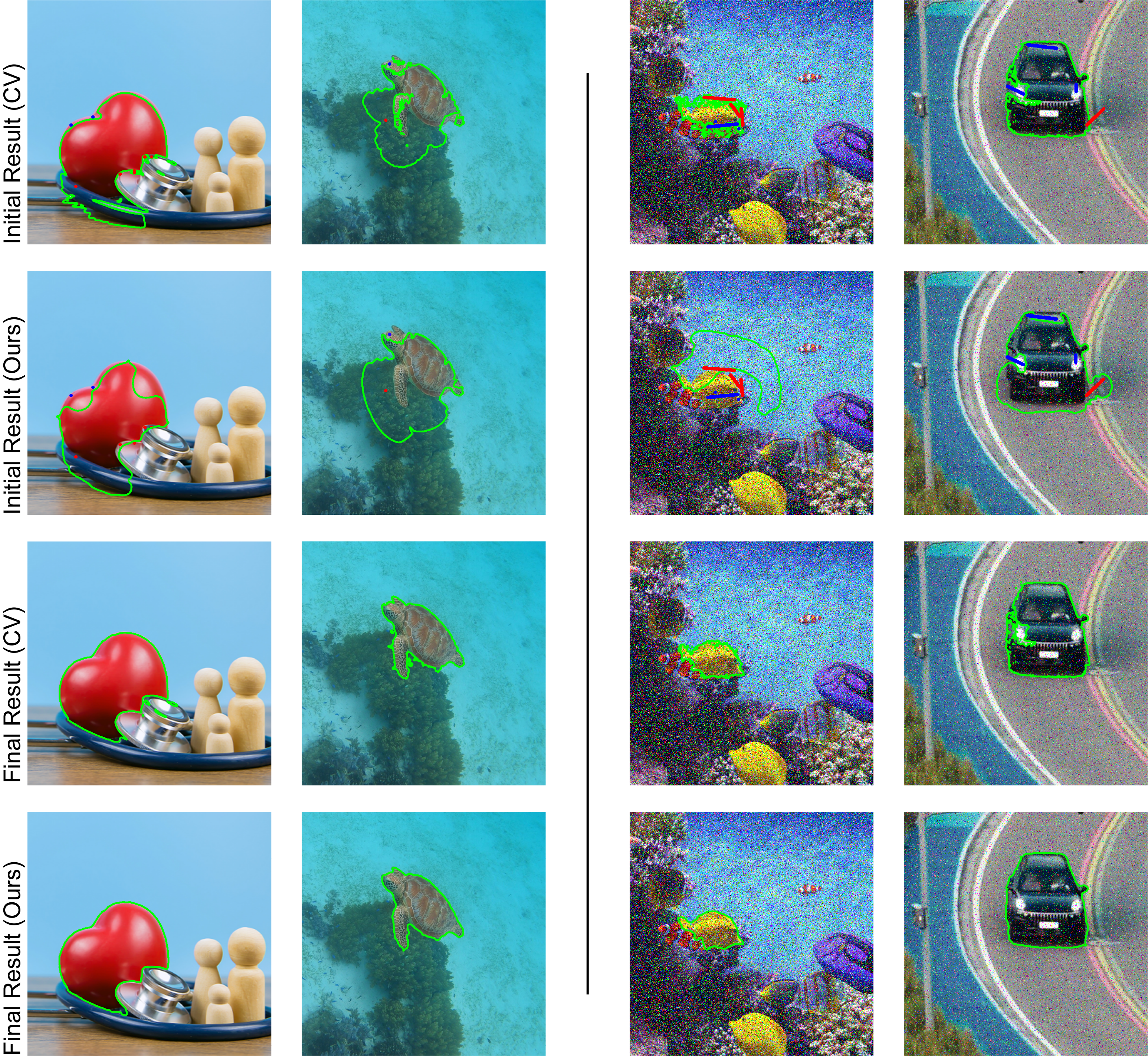}
    \caption{Segmentation results on natural images obtained using the QIS model and the interactive Chan-Vese segmentation model with the integration of click maps. The left side displays the segmentation results on two clean images, while the right side shows the segmentation results on two noisy images.}
    \label{fig:BISnatCV}
\end{figure}

\begin{figure}[ht!]
    \centering
    \includegraphics[width=\textwidth]{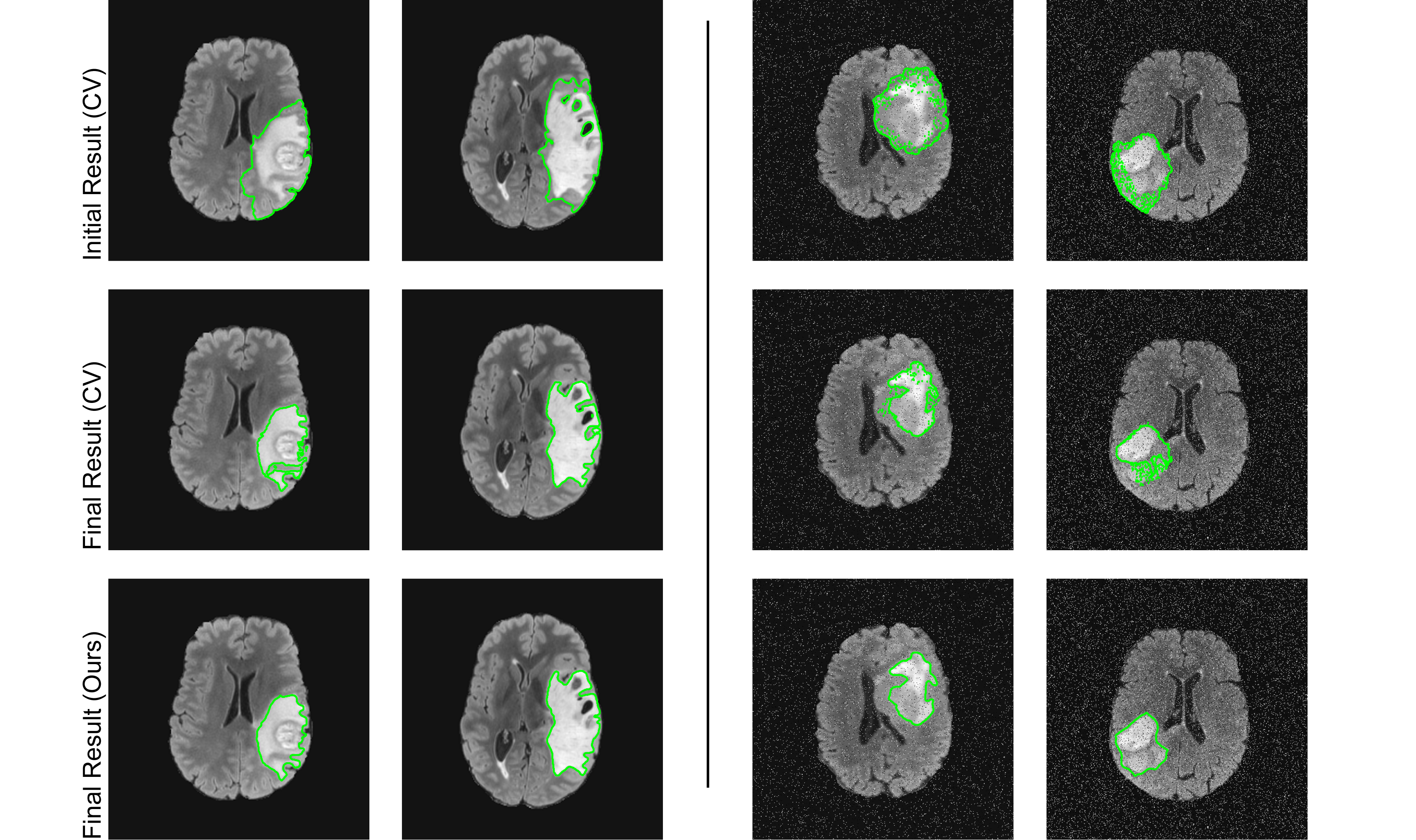}
    \caption{Segmentation results on brain MRIs obtained using the QIS model and the interactive Chan-Vese segmentation model with the integration of the click map. The left side displays the segmentation results on two clean brain MRIs, while the right side shows the segmentation results on two noisy brain MRIs.}
    \label{fig:BISmedCV}
\end{figure}

\paragraph{Comparison with GrabCut \cite{rother2004grabcut}} We compare the QIS model with a popular interactive segmentation model called GrabCut \cite{rother2004grabcut}. Similar to the QIS model, GrabCut is an interactive image segmentation algorithm that uses iterative graph cuts and Gaussian Mixture Models (GMMs) to separate foreground objects from the background. The user provides an initial bounding box around the object of interest, and the algorithm alternates between refining the segmentation using graph cuts and updating the color distribution models via GMMs. The comparison between the QIS and GrabCut methods on a noisy image capturing flowers is shown in Figure \ref{fig:BISnoise1}. In Figure \ref{fig:BISnoise1}(a), the iterative process of the QIS model is displayed. The negative clicks in each iteration, represented by red brushes, are also shown. The QIS effectively segments the noisy image in four steps with only a few brushes introduced. Figure \ref{fig:BISnoise1}(b) illustrates the iterative process of GrabCut, where the brushes introduced by the user (red for foreground and blue for background) are also shown. GrabCut produces a reasonably accurate segmentation mask after seven steps, although it is evident that it is comparatively less accurate than QIS. Figure \ref{fig:BISnoise2} shows a further comparison between GrabCut and QIS using another noisy image capturing a road scene, with the objective of segmenting the car. The iterative process of QIS is shown in (a), where the positive and negative clicks are denoted by red and blue brushes, respectively. QIS accurately segments the car in just five steps. (b) displays the iterative process using GrabCut, with the segmentation result after eight steps exhibited, which is comparatively less accurate than QIS.
\begin{figure}[ht!]
    \centering
    \includegraphics[width=\textwidth]{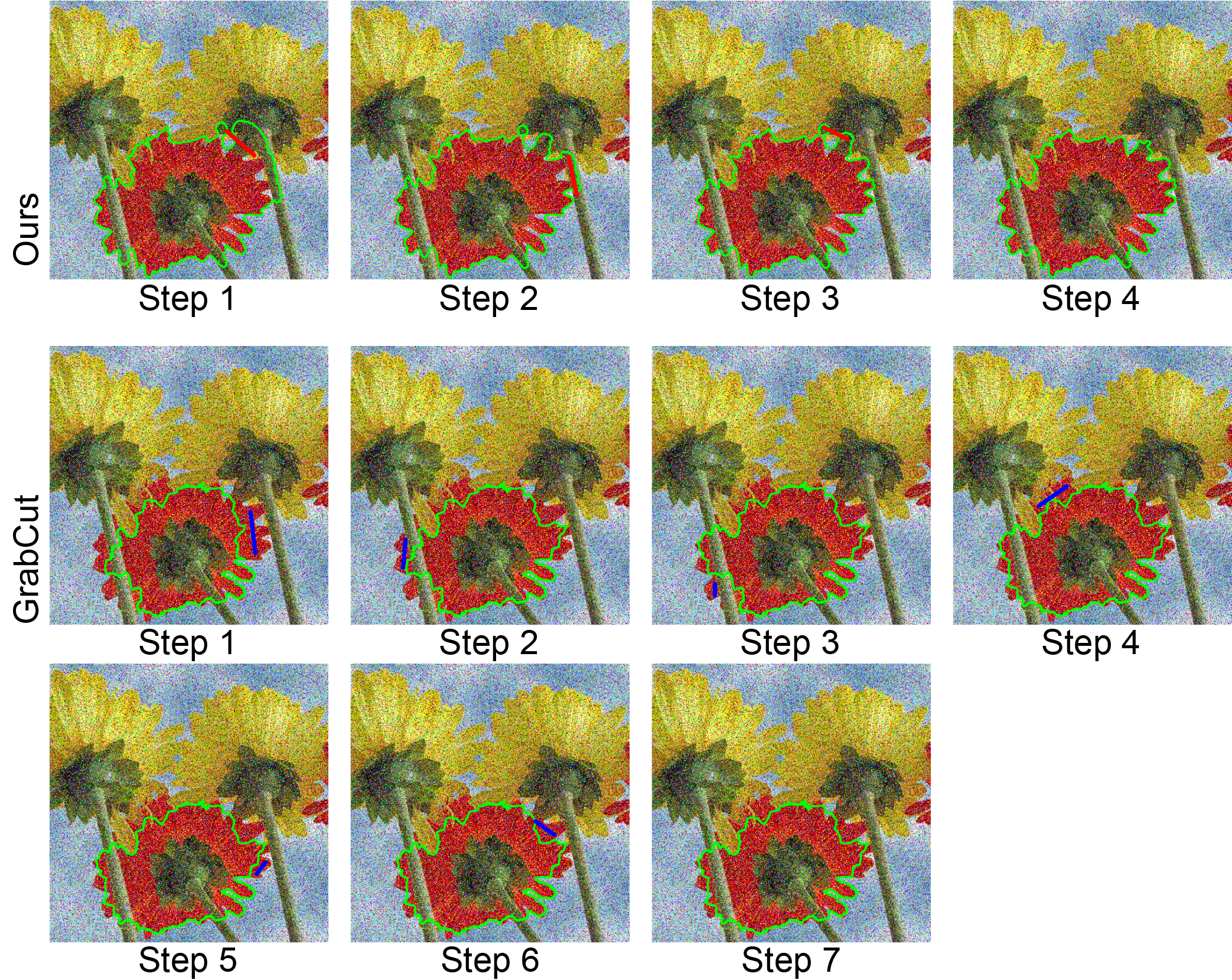}
    \caption{Comparison between QIS and GrabCut segmentation methods. The top shows the iterative segmentation process using the QIS method, with negative clicks labeled in red brushes. The final segmentation result is obtained in 4 steps. The bottom shows the iterative process using the GrabCut method, with positive and negative brushes labeled in blue and red, respectively. The segmentation result after 7 steps is displayed.}
    \label{fig:BISnoise1}
\end{figure}
\begin{figure}[ht!]
    \centering
    \includegraphics[width=\textwidth]{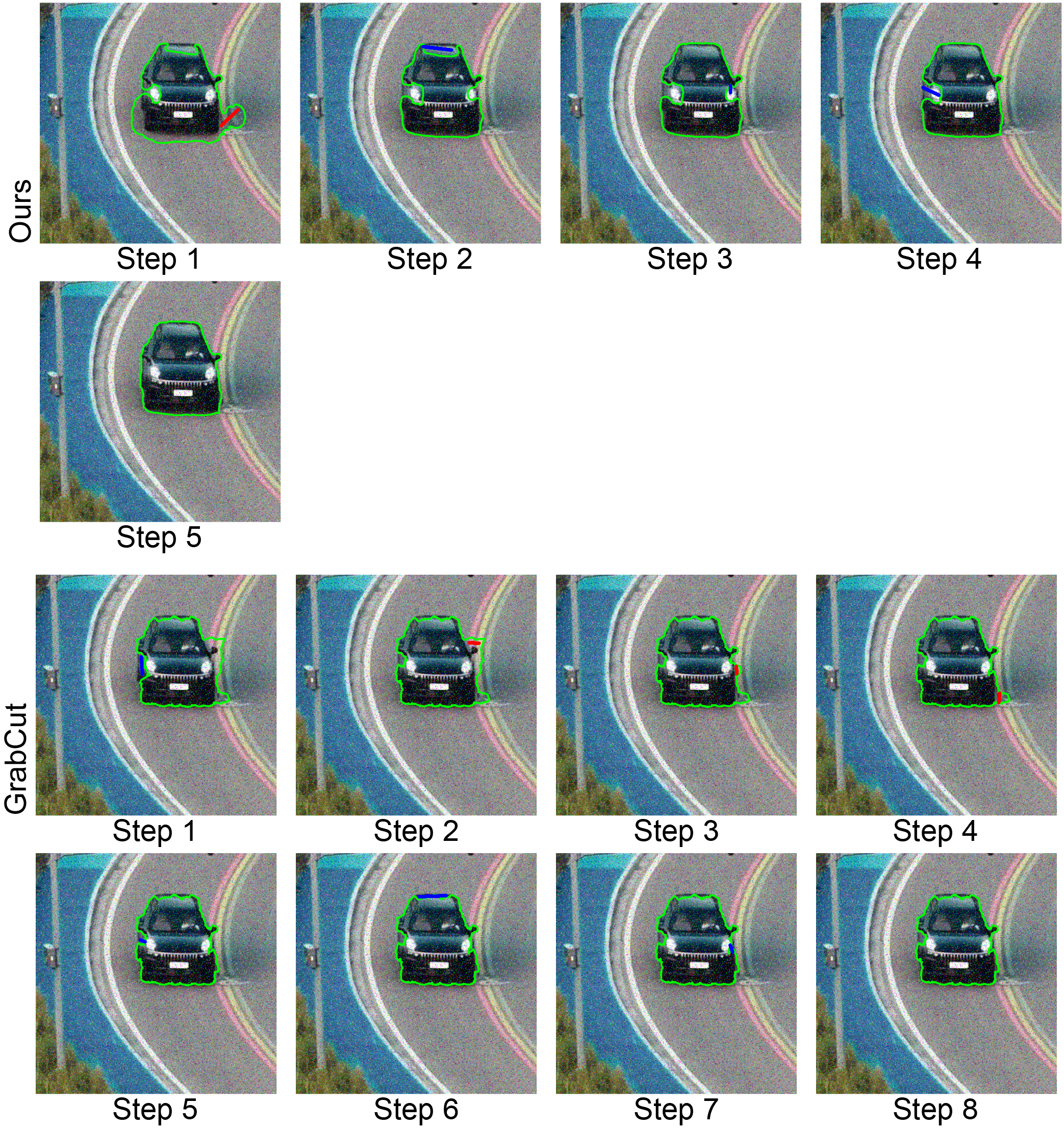}
    \caption{Another comparison between the QIS and GrabCut segmentation methods. The top demonstrates the iterative segmentation process using the QIS method, with positive and negative clicks labeled as blue and red brushes, respectively. The final segmentation result is achieved in 5 steps. The bottom depicts the iterative process using the GrabCut method, with positive and negative brushes labeled as blue and red, respectively. The segmentation results after 8 steps are displayed.}
    \label{fig:BISnoise2}
\end{figure}

\paragraph{Robustness across varied click choices} The QIS model relies on user inputs, specifically the positioning of positive and negative clicks. It is essential to examine the robustness of the QIS model when subjected to different click choices. Figure \ref{fig:BISvarious} depicts the robustness of the QIS model under various click choices. The initial segmentation mask contains undesired regions. In order to exclude these regions, different negative click options are introduced, as illustrated in the first row. The corresponding segmentation results produced by the QIS model, utilizing different click choices, are displayed in the second row. It is evident that the segmentation results remain consistent across different click choices, accurately delineating the lesion region. This exemplifies the robustness of the QIS model under different click choices.

\begin{figure}[ht!]
    \centering
    \includegraphics[width=\textwidth]{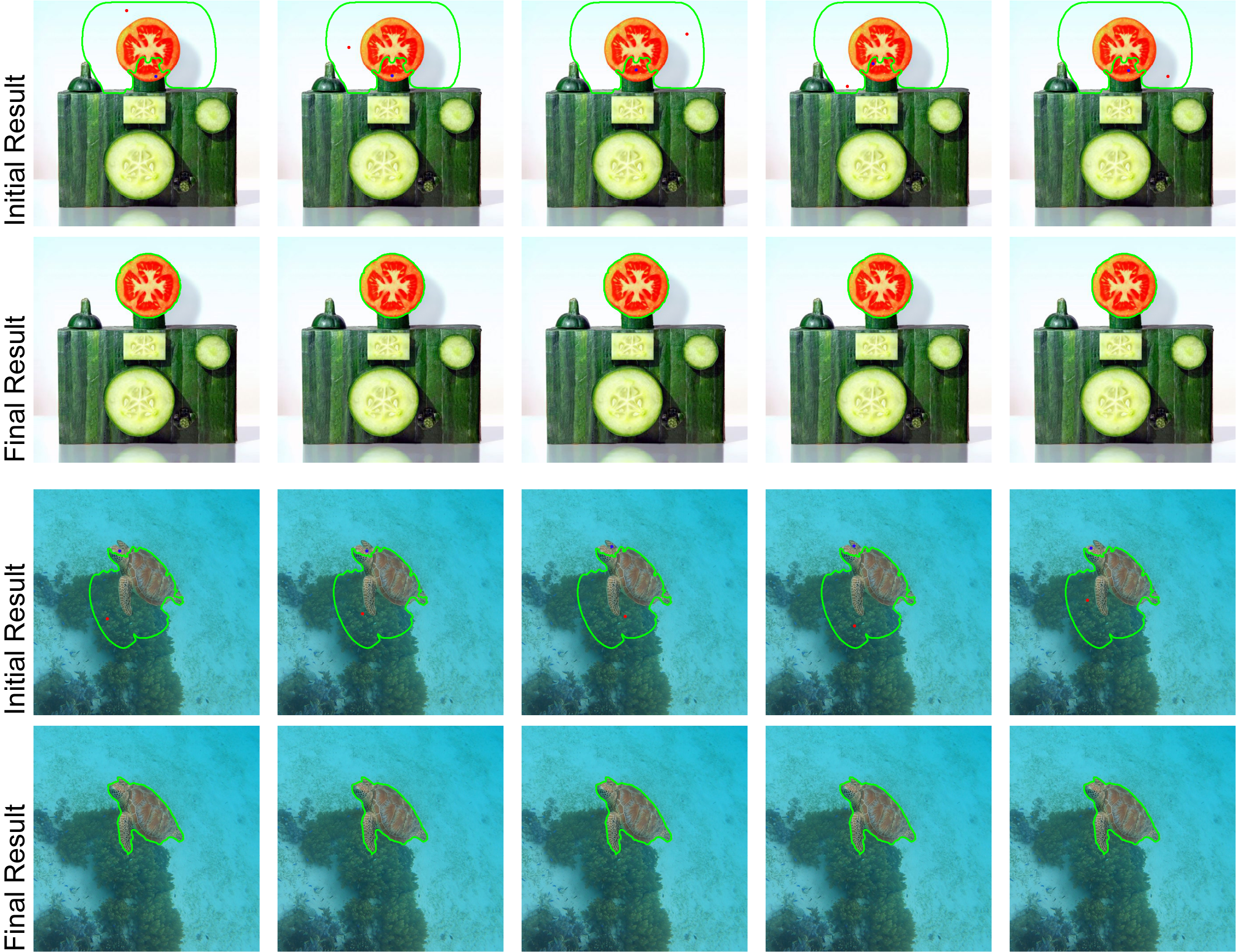}
    \caption{Segmentation results for extracting the region of interest obtained by the QIS model using different click choices. In each case, the upper row displays the initial segmentation results and input click positions, while the lower row presents the corresponding final segmentation results.}
    \label{fig:BISvarious}
\end{figure}

\section{Conclusion}
In this paper, we propose the quasiconformal interactive segmentation (QIS) model, with the incorporation of user interactions. User interactions are defined by two actions: a positive click and a negative click, represented by left and right clicks, respectively. These clicks serve to indicate the regions that the user wants to include or exclude from the segmentation process. Clicks are transformed into a click map, which is then integrated into the proposed model using a novel interactive segmentation energy term that we have designed. The segmentation mask is obtained by deforming a template mask with the same topology as the object of interest using an orientation-preserving quasiconformal mapping. This approach helps to avoid topological errors in the segmentation results. We provide a thorough analysis of the proposed model, including theoretical support for the ability of QIS to include or exclude regions of interest or disinterest based on the user's indication. To evaluate the performance of QIS, we conduct experiments on synthesized images, medical images, natural images, and noisy natural images. Regarding the computational efficiency, our method is not the fastest as presented in Appendix \ref{appendix:efficiency}. While our method may not be the fastest, as noted in Appendix \ref{appendix:efficiency}, it offers superior segmentation quality, representing a reasonable trade-off. Efforts are ongoing to further enhance its speed and efficiency.

Looking ahead, there are two key directions for future work. First, we aim to extend the QIS model to 3D image segmentation, enabling its application in volumetric data. Second, we plan to enhance segmentation outcomes by incorporating edge-aware energies, which can further improve the accuracy and robustness of the QIS model in challenging imaging scenarios.
\section*{Acknowledgments}
Lok Ming Lui is supported by HKRGC GRF (Project ID: 14307622). Daoping Zhang is supported by the National Natural Science Foundation of China (No. 12201320 and 12471484), the Fundamental Research Funds for the Central Universities, Nankai University (No. 63221039 and 63231144), and a visiting scholar of Chern Institute of Mathematics.

\bibliographystyle{siamplain}
\bibliography{references}

\begin{thebibliography}{10}

\bibitem{ali2021image}
{\sc H.~Ali, S.~Faisal, K.~Chen, and L.~Rada}, {\em Image-selective
  segmentation model for multi-regions within the object of interest with
  application to medical disease}, The Visual Computer, 37 (2021),
  pp.~939--955.

\bibitem{baid2021rsna}
{\sc U.~Baid, S.~Ghodasara, S.~Mohan, M.~Bilello, E.~Calabrese, E.~Colak,
  K.~Farahani, J.~Kalpathy-Cramer, F.~C. Kitamura, S.~Pati, et~al.}, {\em The
  rsna-asnr-miccai brats 2021 benchmark on brain tumor segmentation and
  radiogenomic classification}, arXiv preprint arXiv:2107.02314,  (2021).

\bibitem{bers1977quasiconformal}
{\sc L.~Bers}, {\em Quasiconformal mappings, with applications to differential
  equations, function theory and topology}, Bulletin of the American
  Mathematical Society, 83 (1977), pp.~1083--1100.

\bibitem{broit1981optimal}
{\sc C.~Broit}, {\em Optimal Registration of Deformed Images}, PhD thesis,
  University of Pennsylvania, USA, 1981.

\bibitem{chan2018topology}
{\sc H.-L. Chan, S.~Yan, L.-M. Lui, and X.-C. Tai}, {\em Topology-preserving
  image segmentation by beltrami representation of shapes}, Journal of
  Mathematical Imaging and Vision, 60 (2018), pp.~401--421.

\bibitem{chen2021generalized}
{\sc D.~Chen, J.~Spencer, J.-M. Mirebeau, K.~Chen, M.~Shu, and L.~D. Cohen},
  {\em A generalized asymmetric dual-front model for active contours and image
  segmentation}, IEEE Transactions on Image Processing, 30 (2021),
  pp.~5056--5071.

\bibitem{chen2021conditional}
{\sc X.~Chen, Z.~Zhao, F.~Yu, Y.~Zhang, and M.~Duan}, {\em Conditional
  diffusion for interactive segmentation}, in Proc. of Int. Conf. on Computer
  Vision, 2021, pp.~7345--7354.

\bibitem{choi2016spherical}
{\sc G.~P.~T. Choi, K.~T. Ho, and L.~M. Lui}, {\em Spherical conformal
  parameterization of genus-0 point clouds for meshing}, SIAM Journal on
  Imaging Sciences, 9 (2016), pp.~1582--1618.

\bibitem{choi2020free}
{\sc G.~P.~T. Choi, Y.~Liu, and L.~M. Lui}, {\em Free-boundary conformal
  parameterization of point clouds}, Journal of Scientific Computing, 90
  (2022), pp.~1--26.

\bibitem{chumchob2011fourth}
{\sc N.~Chumchob, K.~Chen, and C.~Brito}, {\em A fourth-order variational image
  registration model and its fast multigrid algorithm}, Multiscale Modeling \&
  Simulation, 9 (2011), pp.~89--128.

\bibitem{cootes1995active}
{\sc T.~F. Cootes, C.~J. Taylor, D.~H. Cooper, and J.~Graham}, {\em Active
  shape models-their training and application}, Computer Vision and Image
  Understanding, 61 (1995), pp.~38--59.

\bibitem{fischer2003curvature}
{\sc B.~Fischer and J.~Modersitzki}, {\em Curvature based image registration},
  Journal of Mathematical Imaging and Vision, 18 (2003), pp.~81--85.

\bibitem{fischer2004unified}
{\sc B.~Fischer and J.~Modersitzki}, {\em A unified approach to fast image
  registration and a new curvature based registration technique}, Linear
  Algebra and Its Applications, 380 (2004), pp.~107--124.

\bibitem{gardiner2000quasiconformal}
{\sc F.~P. Gardiner and N.~Lakic}, {\em Quasiconformal teichmuller theory},
  vol.~76, American Mathematical Soc., 2000.

\bibitem{grady2006random}
{\sc L.~Grady}, {\em Random walks for image segmentation}, IEEE Transactions on
  Pattern Analysis and Machine Intelligence, 28 (2006), pp.~1768--1783.

\bibitem{gu2004genus}
{\sc X.~Gu, Y.~Wang, T.~F. Chan, P.~M. Thompson, and S.-T. Yau}, {\em Genus
  zero surface conformal mapping and its application to brain surface mapping},
  IEEE Transactions on Medical Imaging, 23 (2004), pp.~949--958.

\bibitem{gu2003global}
{\sc X.~Gu and S.-T. Yau}, {\em Global conformal surface parameterization}, in
  Proceedings of the 2003 Eurographics/ACM SIGGRAPH symposium on Geometry
  processing, Eurographics Association, 2003, pp.~127--137.

\bibitem{gulshan2010geodesic}
{\sc V.~Gulshan, C.~Rother, A.~Criminisi, A.~Blake, and A.~Zisserman}, {\em
  Geodesic star convexity for interactive image segmentation}, in Proc. of
  IEEE/CVF Conf. on Computer Vision \& Pattern Recognition, IEEE, 2010,
  pp.~3129--3136.

\bibitem{ibrahim2015novel}
{\sc M.~Ibrahim, K.~Chen, and C.~Brito-Loeza}, {\em A novel variational model
  for image registration using gaussian curvature}, Geometry, Imaging and
  Computing, 1 (2014), pp.~417--446.

\bibitem{jaderberg2015spatial}
{\sc M.~Jaderberg, K.~Simonyan, A.~Zisserman, et~al.}, {\em Spatial transformer
  networks}, Proc. of Int. Conf. on Neural Information Processing Systems, 28
  (2015).

\bibitem{jang2019interactive}
{\sc W.-D. Jang and C.-S. Kim}, {\em Interactive image segmentation via
  backpropagating refinement scheme}, in Proc. of IEEE/CVF Conf. on Computer
  Vision \& Pattern Recognition, 2019, pp.~5297--5306.

\bibitem{kass1988snakes}
{\sc M.~Kass, A.~Witkin, and D.~Terzopoulos}, {\em Snakes: Active contour
  models}, Int. J. of Computer Vision., 1 (1988), pp.~321--331.

\bibitem{kim2010nonparametric}
{\sc T.~H. Kim, K.~M. Lee, and S.~U. Lee}, {\em Nonparametric higher-order
  learning for interactive segmentation}, in Proc. of IEEE/CVF Conf. on
  Computer Vision \& Pattern Recognition, IEEE, 2010, pp.~3201--3208.

\bibitem{lang2012calculus}
{\sc S.~Lang}, {\em Calculus of several variables}, Springer Science \&
  Business Media, 2012.

\bibitem{lee2019tetris}
{\sc M.~C.~H. Lee, K.~Petersen, N.~Pawlowski, B.~Glocker, and M.~Schaap}, {\em
  Tetris: Template transformer networks for image segmentation with shape
  priors}, IEEE Transactions on Medical Imaging, 38 (2019), pp.~2596--2606.

\bibitem{lehto1973quasiconformal}
{\sc O.~Lehto and K.~I. Virtanen}, {\em Quasiconformal mappings in the plane},
  vol.~126, Citeseer, 1973.

\bibitem{levy2002least}
{\sc B.~L{\'e}vy, S.~Petitjean, N.~Ray, and J.~Maillot}, {\em Least squares
  conformal maps for automatic texture atlas generation}, ACM Trans. on
  Graphics, 21 (2002), pp.~362--371.

\bibitem{li2018interactive}
{\sc Z.~Li, Q.~Chen, and V.~Koltun}, {\em Interactive image segmentation with
  latent diversity}, in Proc. of IEEE/CVF Conf. on Computer Vision \& Pattern
  Recognition, 2018, pp.~577--585.

\bibitem{liew2019multiseg}
{\sc J.~H. Liew, S.~Cohen, B.~Price, L.~Mai, S.-H. Ong, and J.~Feng}, {\em
  Multiseg: Semantically meaningful, scale-diverse segmentations from minimal
  user input}, in Proc. of Int. Conf. on Computer Vision, 2019, pp.~662--670.

\bibitem{lin2020interactive}
{\sc Z.~Lin, Z.~Zhang, L.-Z. Chen, M.-M. Cheng, and S.-P. Lu}, {\em Interactive
  image segmentation with first click attention}, in Proc. of IEEE/CVF Conf. on
  Computer Vision \& Pattern Recognition, 2020, pp.~13339--13348.

\bibitem{lui2014teichmuller}
{\sc L.~M. Lui, K.~C. Lam, S.-T. Yau, and X.~Gu}, {\em Teichmuller mapping
  (t-map) and its applications to landmark matching registration}, SIAM Journal
  on Imaging Sciences, 7 (2014), pp.~391--426.

\bibitem{mumford1989optimal}
{\sc D.~B. Mumford and J.~Shah}, {\em Optimal approximations by piecewise
  smooth functions and associated variational problems}, Communications on Pure
  and Applied Mathematics, 42 (1989), pp.~577--685.

\bibitem{rada2012new}
{\sc L.~Rada and K.~Chen}, {\em A new variational model with dual level set
  functions for selective segmentation}, Communications in Computational
  Physics, 12 (2012), pp.~261--283.

\bibitem{rada2014variational}
{\sc L.~Rada and K.~Chen}, {\em A variational model and its numerical solution
  for local, selective and automatic segmentation}, Numerical Algorithms, 66
  (2014), pp.~399--430.

\bibitem{remmert1991theory}
{\sc R.~Remmert}, {\em Theory of complex functions}, vol.~122, Springer Science
  \& Business Media, 1991.

\bibitem{roberts2019convex}
{\sc M.~Roberts, K.~Chen, and K.~L. Irion}, {\em A convex geodesic selective
  model for image segmentation}, Journal of Mathematical Imaging and Vision, 61
  (2019), pp.~482--503.

\bibitem{roberts2019multigrid}
{\sc M.~Roberts, K.~Chen, and K.~L. Irion}, {\em Multigrid algorithm based on
  hybrid smoothers for variational and selective segmentation models},
  International Journal of Computer Mathematics, 96 (2019), pp.~1623--1647.

\bibitem{rother2004grabcut}
{\sc C.~Rother, V.~Kolmogorov, and A.~Blake}, {\em "grabcut" interactive
  foreground extraction using iterated graph cuts}, ACM Trans. on Graphics, 23
  (2004), pp.~309--314.

\bibitem{siu2020image}
{\sc C.~Y. Siu, H.~L. Chan, and M.~L. Lui}, {\em Image segmentation with
  partial convexity shape prior using discrete conformality structures}, SIAM
  Journal on Imaging Sciences, 13 (2020), pp.~2105--2139.

\bibitem{sofiiuk2020f}
{\sc K.~Sofiiuk, I.~Petrov, O.~Barinova, and A.~Konushin}, {\em f-brs:
  Rethinking backpropagating refinement for interactive segmentation}, in Proc.
  of IEEE/CVF Conf. on Computer Vision \& Pattern Recognition, 2020,
  pp.~8623--8632.

\bibitem{sofiiuk2022reviving}
{\sc K.~Sofiiuk, I.~A. Petrov, and A.~Konushin}, {\em Reviving iterative
  training with mask guidance for interactive segmentation}, in Proc. of IEEE
  Int. Conf. on Image Processing, 2022, pp.~3141--3145.

\bibitem{spencer2015convex}
{\sc J.~Spencer and K.~Chen}, {\em A convex and selective variational model for
  image segmentation}, Communications in Mathematical Sciences, 13 (2015),
  pp.~1453--1472.

\bibitem{spencer2018parameter}
{\sc J.~Spencer, K.~Chen, and J.~Duan}, {\em Parameter-free selective
  segmentation with convex variational methods}, IEEE Transactions on Image
  Processing, 28 (2018), pp.~2163--2172.

\bibitem{wu2024cv}
{\sc Y.~Wu}, {\em Chan vese active contours without edges}, 2024,
  \url{https://ww2.mathworks.cn/matlabcentral/fileexchange/23445-chan-vese-active-contours-without-edges}.

\bibitem{xu2016deep}
{\sc N.~Xu, B.~Price, S.~Cohen, J.~Yang, and T.~S. Huang}, {\em Deep
  interactive object selection}, in Proc. of IEEE/CVF Conf. on Computer Vision
  \& Pattern Recognition, 2016, pp.~373--381.

\bibitem{zhang2018novel}
{\sc D.~Zhang and K.~Chen}, {\em A novel diffeomorphic model for image
  registration and its algorithm}, Journal of Mathematical Imaging and Vision,
  60 (2018), pp.~1261--1283.

\bibitem{zhang2022unifying}
{\sc D.~Zhang, G.~P.~T. Choi, J.~Zhang, and L.~M. Lui}, {\em A unifying
  framework for n-dimensional quasi-conformal mappings}, SIAM Journal on
  Imaging Sciences, 15 (2022), pp.~960--988.

\bibitem{zhang2021topology}
{\sc D.~Zhang and L.~M. Lui}, {\em Topology-preserving 3d image segmentation
  based on hyperelastic regularization}, Journal of Scientific Computing, 87
  (2021), pp.~1--33.

\bibitem{zhang2021topoconv}
{\sc D.~Zhang, X.-C. Tai, and L.~M. Lui}, {\em Topology-and
  convexity-preserving image segmentation based on image registration}, Applied
  Mathematical Modelling, 100 (2021), pp.~218--239.

\bibitem{zhang2022topology}
{\sc H.~Zhang and L.~M. Lui}, {\em Topology-preserving segmentation network: A
  deep learning segmentation framework for connected component}, arXiv preprint
  arXiv:2202.13331,  (2022).

\bibitem{zhang2015variational}
{\sc J.~Zhang and K.~Chen}, {\em Variational image registration by a total
  fractional-order variation model}, Journal of Computational Physics, 293
  (2015), pp.~442--461.

\end{thebibliography}

\appendix
 \section{Proof of Theorem \ref{theorem_three_value}}\label{proof1} 
\begin{proof}
    To prove this, we need to check if there is a case that a subset of $\Omega_0$, $\Omega_1$ or $\Omega_2$ is included in a local minimum. To do so, we investigate the value of $E(G = \mathcal{D}_0 \cup \mathcal{D}_1 \cup \mathcal{D}_2)$, where $\mathcal{D}_0 \subset \Omega_0$, $\mathcal{B}_0 = \Omega_0 \setminus \mathcal{D}_0$, $\mathcal{D}_1 \subset \Omega_1$, $\mathcal{B}_1 = \Omega_1 \setminus \mathcal{D}_1$, $\mathcal{D}_2 \subset \Omega_2$ and $\mathcal{B}_2 = \Omega_2 \setminus \mathcal{D}_2$.
    
    When $G=\mathcal{D}_0 \cup \mathcal{D}_1 \cup \mathcal{D}_2$, we have 
     \begin{equation*}
    c_1  = \frac{p_0||\mathcal{D}_0|| + p_1||\mathcal{D}_1|| + p_2||\mathcal{D}_2||}{||\mathcal{D}_0|| + ||\mathcal{D}_1|| + ||\mathcal{D}_2||}   
  \   \mathrm{and} \    
    c_2 = \frac{p_0 ||\mathcal{B}_0|| + p_1 ||\mathcal{B}_1|| + p_2 ||\mathcal{B}_2||}{||\Omega_0|| + ||\mathcal{B}_1|| + ||\mathcal{B}_2||}.
    \end{equation*}
    So we get 
    \begin{equation*}
    \begin{aligned}
    E(c_{1},c_{2},G=\mathcal{D}_0 \cup \mathcal{D}_1 \cup \mathcal{D}_2)  
    &= (p_0-c_1)^2 ||\mathcal{D}_0|| + (p_1-c_1)^2 ||\mathcal{D}_1|| + (p_2-c_1)^2 ||\mathcal{D}_2||\\ &\quad 
    + (p_0-c_2)^2 ||\mathcal{B}_0|| + (p_1-c_2)^2 ||\mathcal{B}_1|| + (p_2-c_2)^2 ||\mathcal{B}_2||.
    \end{aligned}
    \end{equation*}
    
    We take $A_0 = ||\Omega_0||$, $A_1 = ||\Omega_1||$, $A_2 = ||\Omega_2||$, $B_0 = ||\mathcal{B}_0||$, $D_0 = ||\mathcal{D}_0||$, $B_1 = ||\mathcal{B}_1||$, $D_1 = ||\mathcal{D}_1||$, $B_2 = ||\mathcal{B}_2||$, $D_2 = ||\mathcal{D}_2||$, $d_{12} = |p_1-p_2|$, $d_{01} = |p_0-p_1|$ and $d_{02} = |p_0-p_2|$. Then we can rewrite the above equation as:
    \begin{equation*}
    \begin{aligned}
    E(c_{1},c_{2},G)     
    &= \frac{d_{12}^2 D_1 D_2 + d_{01}^2 D_0 D_1 + d_{02}^2 D_0 D_2}{D_0 + D_1 + D_2}
    + \frac{d_{12}^2 B_1 B_2 + d_{01}^2 B_0 B_1 + d_{02}^2 B_0 B_2}{B_0 + B_1 + B_2}\\
    &=d_{12}^2 (D_2- \frac{D_0 D_2 + D_2^2}{D_0+A_1+D_2-B_1})
    + d_{01}^2 (D_0- \frac{D_0^2 + D_0 D_2}{D_0+A_1+D_2-B_1}) \\ 
    &\quad + d_{02}^2 \frac{D_0 D_2}{D_0+A_1+D_2-B_1}\\ 
    &\quad 
    + d_{12}^2 (B_2- \frac{B_0 B_2 + B_2^2}{B_0+B_1+B_2})
    + d_{01}^2 (B_0- \frac{B_0^2 + B_0 B_2}{B_0+B_1+B_2}) 
    + d_{02}^2 \frac{B_0 B_2}{B_0+B_1+B_2}. 
    \end{aligned}
    \end{equation*}  
 Treating $B_0$, $B_1$, $B_2$ as three variables, we can have $B_0 \in [0, A_0]$, $B_1 \in [0, A_1]$, and $B_2 \in [0, A_2]$ as $\mathcal{B}_0 \subset \Omega_0$, $\mathcal{B}_1 \subset \Omega_1$, and $\mathcal{B}_2 \subset \Omega_2$. Naturally, $D_0 = A_0 - B_0$, $D_1 = A_1 - B_1$, $D_2 = A_2 - B_2$. Further, define a function: $F(B_0,B_1,B_2) = E(c_{1},c_{2},G=\mathcal{D}_0 \cup \mathcal{D}_1 \cup \mathcal{D}_2).$  
    We then compute $\frac{\partial F}{\partial B_1}$:
    \begin{equation*}
    \begin{aligned}
    \frac{\partial F}{\partial B_1} 
    &= - \frac{d_{12}^2(D_0 D_2 + D_2^2)+d_{01}^2(D_0^2 + D_0 D_2)-d_{02}^2D_0 D_2}{(D_0+A_1+D_2-B_1)^2} \\ &\quad 
    +  \frac{d_{12}^2(B_0 B_2 + B_2^2)+d_{01}^2(B_0^2 + B_0 B_2)-d_{02}^2B_0 B_2}{(B_0+B_1+B_2)^2}.
    \label{eq:energydifferential}
    \end{aligned}
    \end{equation*}
    By definition of $d_{12}$, $d_{01}$ and $d_{02}$, we have $ (d_{01} + d_{12})^2 \geq d_{02}^2$ and $d_{01}^{2}+d_{12}^{2}\geq 2d_{01}d_{02}$.
    Thus, we obtain
    \begin{equation*}
     d_{12}^2(D_0 D_2 + D_2^2)+d_{01}^2(D_0^2 + D_0 D_2)-d_{02}^2D_0 D_2  \geq (d_{01}+d_{12})^{2}D_0 D_2 - d_{02}^2D_0 D_2 \geq 0
    \end{equation*}
    and
 \begin{equation*}
     d_{12}^2(B_0 B_2 + B_2^2)+d_{01}^2(B_0^2 + B_0 B_2)-d_{02}^2B_0 B_2 
    \geq 
    (d_{01}+d_{12})^{2}B_0 B_2 - d_{02}^2B_0 B_2 \geq 0.
    \end{equation*}
    Hence, both the first term and the second term of $\frac{\partial F}{\partial B_1}$ are monotonously decreasing for any $B_1 \in (0,A_1)$ given a fixed $B_0 \in (0,A_0)$ and $B_2 \in (0,A_2)$.

    As $\frac{\partial F}{\partial B_1}$ is monotonously decreasing for any given $B_0$ and $B_2$, there is at most one maximum within $B_1 \in (0,A_1)$ for any given $B_0$ and $B_2$. The minimum of $F(B_0,B_1,B_2)$ must only be obtained when $B_1 = 0$ or $B_1 = A_1$ for any given $B_0$ and $B_2$. So is for $B_2$ given any $B_0$ and $B_1$. Thus, the minimum can never be obtained when some subset of the domain $\Omega_i$ is included.
\end{proof}
 \section{Proof of Theorem \ref{theorem_positive_click}}\label{proof2} 
\begin{proof} 
Bringing the click map $M_{\mathrm{clicks}}=\mathcal{X}_{\Omega_2}$ into the interactive segmentation fidelity energy \eqref{eq:interactive}, we hope to get a segmentation $G^{**} = \Omega_1 \cup \Omega_2$ under a properly chosen $r$. Alternatively speaking, we hope that the following conditions are satisfied
\begin{equation}
\begin{cases}
    E_{int}(c_1, c_2, G = \Omega_1 \cup \Omega_2; M_{\mathrm{clicks}}=\mathcal{X}_{\Omega_2}) \leq E_{int}(c_1, c_2, G = \Omega_1; M_{\mathrm{clicks}}=\mathcal{X}_{\Omega_2}), \\
    E_{int}(c_1, c_2, G = \Omega_1 \cup \Omega_2; M_{\mathrm{clicks}}=\mathcal{X}_{\Omega_2}) \leq E_{int}(c_1, c_2, G = \Omega_2; M_{\mathrm{clicks}}=\mathcal{X}_{\Omega_2}).     
\end{cases}
\label{eq:H2condition}
\end{equation}
Note that for the computation of $E_{int}$, we can do some transformations as
\begin{equation}
\begin{split}
    E_{int}(c_1, c_2, G;M_{\mathrm{clicks}}) &= \int_\Omega (I + rM_{\mathrm{clicks}} - c_1 \mathcal{X}_{G} - c_2 \mathcal{X}_{G^c})^2 \mathrm{d}\bm{x} \\
    &= \int_\Omega (I^* - c_1 \mathcal{X}_{G} - c_2 \mathcal{X}_{G^c})^2 \mathrm{d}\bm{x},
    \label{eq:interactiveE}
\end{split}
\end{equation}
where 
\begin{equation}
I^* =
\begin{cases}
p_0, &\ (x,y) \in \Omega_0,\\
p_1, &\ (x,y) \in \Omega_1,\\
p_2 + r, &\ (x,y) \in \Omega_2.\\
\end{cases}    
\label{eq:imageandclick}
\end{equation}
The second equation in \eqref{eq:interactiveE} is similar to \eqref{eq:segmentationG}, then we can directly replace $p_{2}$ with $p_{2}+r$ in \eqref{eq:E1}, \eqref{eq:E2} and \eqref{eq:E12} to obtain
\begin{equation}
\begin{aligned}    
    &E_{int}(c_1, c_2, G=\Omega_1; M_{\mathrm{clicks}}=\mathcal{X}_{\Omega_2})= (p_0 - p_2 - r)^2 \frac{||\Omega_2|| ||\Omega_0||}{||\Omega_2|| + ||\Omega_0||},\\
    &E_{int}(c_1, c_2, G=\Omega_2; M_{\mathrm{clicks}}=\mathcal{X}_{\Omega_2}) = (p_0 - p_1)^2 \frac{||\Omega_1|| ||\Omega_0||}{||\Omega_1|| + ||\Omega_0||},\\
    &E_{int}(c_1, c_2, G=\Omega_1 \cup \Omega_2; M_{\mathrm{clicks}}=\mathcal{X}_{\Omega_2}) = (p_1 - p_2 - r)^2 \frac{||\Omega_1|| ||\Omega_2||}{||\Omega_1|| + ||\Omega_2||}.
\end{aligned}    
\label{eq:E_H2}
\end{equation}
Substituting \eqref{eq:E_H2} into \eqref{eq:H2condition}, we obtain
\begin{equation}\label{twoinequalities_1}
\begin{cases}
    (p_1-p_2-r)^2 \frac{||\Omega_1||}{||\Omega_1|| + ||\Omega_2||} \leq (p_0-p_2-r)^2\frac{||\Omega_0||}{||\Omega_2|| + ||\Omega_0||}, \\
    (p_1-p_2-r)^2 \frac{||\Omega_2||}{||\Omega_1|| + ||\Omega_2||} \leq (p_0-p_1)^2\frac{||\Omega_0||}{||\Omega_1|| + ||\Omega_0||}.
\end{cases}
\end{equation}

For the first inequality in \eqref{twoinequalities_1}, we have
$
 (p_1-p_2-r)^2 \frac{||\Omega_1||(||\Omega_2|| + ||\Omega_0||)}{||\Omega_0||(||\Omega_1|| + ||\Omega_2||)} \leq (p_0-p_2-r)^2. 
$
Taking $A=\sqrt{\frac{||\Omega_1||(||\Omega_2|| + ||\Omega_0||)}{||\Omega_0||(||\Omega_1|| + ||\Omega_2||)}}$, we get
\begin{equation}
        ((p_1-p_2)A - (p_0-p_2) - (A-1)r) ((p_1-p_2)A + (p_0-p_2) - (A+1)r) \leq 0. 
\label{eq:E_H2_IEQ1}
\end{equation}
For $A \neq 1$, we further obtain
\begin{equation*}
    (A-1)(A+1)\left(\frac{(p_1-p_2)A - (p_0-p_2)}{(A-1)} - r\right) \left(\frac{(p_1-p_2)A + (p_0-p_2)}{(A+1)} - r\right) \leq 0. 
\end{equation*}
Set $R_x = \frac{(p_1-p_2)A - (p_0-p_2)}{A-1}$ and $R_y = \frac{(p_1-p_2)A + (p_0-p_2)}{A+1}$. Note that here there can never be $p_1 = p_0$, or they should be in the same region for an object in the image. So we have
\begin{itemize}
\item
$
\begin{cases}
    R_x \leq r \leq R_y \ \text{when $p_0 > p_1$}\\
    R_y \leq r \leq R_x \ \text{when $p_0 < p_1$}\\
\end{cases}
$, when $A>1$ ($||\Omega_1|| > ||\Omega_0||$).
\item
$
\begin{cases}
    r\leq R_y \text{  or  } R_x\leq r \ \text{when $p_0 > p_1$}\\
    r\leq R_x \text{  or  } R_y\leq r \ \text{when $p_0 < p_1$}\\
\end{cases}
$, when $0<A<1$ ($||\Omega_1|| < ||\Omega_0||$).
\end{itemize} 
For $A=1$ ($||\Omega_1|| = ||\Omega_0||$), by \eqref{eq:E_H2_IEQ1}, we get
$
    (p_1 - p_0) \left(\frac{p_1+p_0- 2 p_2}{2} - r\right) \leq 0.
$
Since $R_y = \frac{p_1+p_0- 2 p_2}{2}$ when $A=1$, thus we have
\begin{equation*}
\begin{cases}
    R_y \geq r \ \text{when $p_0 > p_1$},\\
    R_y \leq r \ \text{when $p_0 < p_1$}.\\
\end{cases}
\end{equation*}

For the second inequality in \eqref{twoinequalities_1}, it can lead to  
$
    (p_1-p_2-r)^2 \frac{||\Omega_2||(||\Omega_1|| + ||\Omega_0||)}{||\Omega_0||(||\Omega_1|| + ||\Omega_2||)} \leq (p_0-p_1)^2.    
$
Taking $B=\sqrt{\frac{||\Omega_2||(||\Omega_1|| + ||\Omega_0||)}{||\Omega_0||(||\Omega_1|| + ||\Omega_2||)}}$, we have
$
    \left(\frac{(p_1-p_2)B + (p_0-p_1)}{B} - r\right) \left(\frac{(p_1-p_2)B - (p_0-p_1)}{B} - r\right) \leq 0.    
$
Set $Q_x = \frac{(p_1-p_2)B - (p_0-p_1)}{B}$ and $Q_y = \frac{(p_1-p_2)B + (p_0-p_1)}{B}$, then we get
\begin{equation*}
\begin{cases}
    Q_x \leq r \leq Q_y \ \text{when $p_0 > p_1$},\\
    Q_y \leq r \leq Q_x \ \text{when $p_0 < p_1$}.\\
\end{cases}
\end{equation*}
It is easy to find the relation among $R_x$, $Q_x$, $R_y$ and $Q_y$, 
\begin{equation*}
\begin{cases}
    R_x \leq Q_x \leq R_y \leq Q_y \ \text{when $A>1$ and $p_0 > p_1$},\\
    Q_y \leq R_y \leq Q_x \leq R_x \ \text{when $A>1$ and $p_0 < p_1$},\\
    Q_x \leq R_y \leq Q_y \leq R_x \ \text{when $1>A>0$ and $p_0 > p_1$},\\
    R_x \leq Q_y \leq R_y \leq Q_x \ \text{when $1>A>0$ and $p_0 < p_1$},\\
    Q_x \leq R_y \leq Q_y \ \text{when $A=1$ and $p_0 > p_1$},\\
    Q_y \leq R_y \leq Q_x \ \text{when $A=1$ and $p_0 < p_1$}.\\
\end{cases}
\end{equation*}

Hence, the model \eqref{eq:interactive} will give an optimal segmentation $G^{**}=\Omega_{1}\cup\Omega_{2}$ if the parameter $r$ satisfies the following condition
\begin{equation*}
\begin{cases}
    Q_x \leq r \leq R_y \ \text{when $p_0 > p_1$},\\
    R_y \leq r \leq Q_x \ \text{when $p_0 < p_1$},
\end{cases}
\label{neq:positive}
\end{equation*}
which completes the proof.
\end{proof}

\section{Proof of Theorem \ref{theorem_negative_click}}\label{proof3} 
\begin{proof}
Bringing the click map $M_{\mathrm{clicks}}=\mathcal{X}_{\Omega_2}$ into the interactive segmentation fidelity energy \eqref{eq:interactive}, we hope to get a segmentation $G^{**} = \Omega_1$ under a properly chosen $r$. Namely, the following conditions should be satisfied
\begin{equation*}
\begin{cases}
    E_{int}(c_1, c_2, G = \Omega_1; M_{\mathrm{clicks}}=\mathcal{X}_{\Omega_2}) \leq E_{int}(c_1, c_2, G = \Omega_1 \cup \Omega_2; M_{\mathrm{clicks}}=\mathcal{X}_{\Omega_2}), \\
    E_{int}(c_1, c_2, G = \Omega_1; M_{\mathrm{clicks}}=\mathcal{X}_{\Omega_2}) \leq E_{int}(c_1, c_2, G = \Omega_2; M_{\mathrm{clicks}}=\mathcal{X}_{\Omega_2}).     
\end{cases}
\label{eq:H12condition}
\end{equation*}

Similar to what we do in the positive click case, we also transform the interactive segmentation fidelity energy \eqref{eq:interactive} into \eqref{eq:interactiveE} and write $I^*$ as \eqref{eq:imageandclick}. With exactly the same consideration, we obtain

\begin{equation}\label{twoinequalities_2}
\begin{cases}
    (p_0-p_2-r)^2\frac{||\Omega_0||}{||\Omega_2|| + ||\Omega_0||} \leq (p_1-p_2-r)^2 \frac{||\Omega_1||}{||\Omega_1|| + ||\Omega_2||},\\
    (p_0-p_2-r)^2\frac{||\Omega_2||}{||\Omega_2|| + ||\Omega_0||} \leq (p_0-p_1)^2\frac{||\Omega_1||}{||\Omega_1|| + ||\Omega_0||}.
\end{cases}
\end{equation}

For the first inequality in \eqref{twoinequalities_2}, we have
$
    (p_0-p_2-r)^2 \frac{||\Omega_0||(||\Omega_2|| + ||\Omega_1||)}{||\Omega_1||(||\Omega_0|| + ||\Omega_2||)} \leq (p_1-p_2-r)^2.   
$
Then taking $C=\sqrt{\frac{||\Omega_0||(||\Omega_2|| + ||\Omega_1||)}{||\Omega_1||(||\Omega_0|| + ||\Omega_2||)}}$, $L_x = \frac{(p_0-p_2)C - (p_1-p_2)}{C-1}$ and $L_y = \frac{(p_0-p_2)C + (p_1-p_2)}{C+1}$ and following the same computational procedure in the positive click case, for $C\neq 1$, we get
\begin{itemize}
\item
$
\begin{cases}
    L_y\leq r \leq L_x \ \text{when $p_0 > p_1$}\\
    L_x\leq r \leq L_y \ \text{when $p_0 < p_1$}\\
\end{cases}
$, when $C>1$ ($||\Omega_0|| > ||\Omega_1||$).
\item 
$
\begin{cases}
    r\leq L_x \text{  or  } L_y\leq r \ \text{when $p_0 > p_1$}\\
    r\leq L_y \text{  or  } L_x\leq r \ \text{when $p_0 < p_1$}\\
\end{cases}
$, when $0<C<1$ ($||\Omega_0|| < ||\Omega_1||$).
\end{itemize}
For $C=1$ ($||\Omega_0|| = ||\Omega_1||$), we have
$
    (p_0 - p_1) \left(\frac{p_1+p_0- 2 p_2}{2} - r\right) \leq 0.
$
Since $L_y = \frac{p_1+p_0- 2 p_2}{2}$ when $C=1$, we thus obtain
\begin{equation*}
\begin{cases}
    L_y \leq r \ \text{when $p_0 > p_1$},\\
    L_y \geq r \ \text{when $p_0 < p_1$}.\\
\end{cases}
\end{equation*}

For the second inequality in \eqref{twoinequalities_2}, we have
$
    (p_0-p_2-r)^2 \frac{||\Omega_2||(||\Omega_0|| + ||\Omega_1||)}{||\Omega_1||(||\Omega_0|| + ||\Omega_2||)} \leq (p_0-p_1)^2.
$
Taking $E=\sqrt{\frac{||\Omega_2||(||\Omega_0|| + ||\Omega_1||)}{||\Omega_1||(||\Omega_0|| + ||\Omega_2||)}}$, then we get
$
    \left(\frac{(p_0-p_2)E - (p_0-p_1)}{E} - r\right) \left(\frac{(p_0-p_2)E + (p_0-p_1)}{E} - r\right) \leq 0 .
$
Set $P_x = \frac{(p_0-p_2)E - (p_0-p_1)}{E}$ and $P_y = \frac{(p_0-p_2)E + (p_0-p_1)}{E}$, so we obtain
\begin{equation*}
\begin{cases}
    P_x \leq r \leq P_y \ \text{when $p_0 > p_1$},\\
    P_y \leq r \leq P_x \ \text{when $p_0 < p_1$}.\\
\end{cases}
\end{equation*}
Through simple algebraic computation, we can find the relation among $L_x$, $P_x$, $L_y$ and $P_y$,
\begin{equation*}
\begin{cases}
    P_x \leq L_y \leq P_y \leq L_x \ \text{when $C>1$ and $p_0 > p_1$},\\
    L_x \leq P_y \leq L_y \leq P_x \ \text{when $C>1$ and $p_0 < p_1$},\\
    L_x \leq P_x \leq L_y \leq P_y \ \text{when $1>C>0$ and $p_0 > p_1$},\\
    P_y \leq L_y \leq P_x \leq L_x \ \text{when $1>C>0$ and $p_0 < p_1$},\\
    P_x \leq L_y \leq P_y \ \text{when $C=1$ and $p_0 > p_1$},\\
    P_y \leq L_y \leq P_x \ \text{when $C=1$ and $p_0 < p_1$}.\\
\end{cases}
\end{equation*}

Hence, the model \eqref{eq:interactive} will give an optimal segmentation $G^{**}=\Omega_{1}$ if the parameter $r$ satisfies the following condition
\begin{equation*}
\begin{cases}
    L_y \leq r \leq P_y \ \text{when $p_0 > p_1$},\\
    P_y \leq r \leq L_y \ \text{when $p_0 < p_1$}.
\end{cases}
\label{neq:negative}
\end{equation*}
which completes the proof.
\end{proof}

\section{Computational Efficiency Comparison}
\label{appendix:efficiency}
By solving a quasi-conformal registration-based segmentation model in each step, our method can preserve the topology and produce results without noise and outliers. However, as a deformable model, our method needs to solve a more complex problem compared to either the CV model or GrabCut, which leads to more time consumption, as reported in Table~\ref{tb:CompareCV} and Table~\ref{tb:CompGrabCut}.

\newcolumntype{D}{>{\centering\arraybackslash}p{1.1cm}}
\begin{table}[!ht]
\centering
\begin{tabular}{l|DD|DD|DD|DD}
\toprule
Experiment      & \multicolumn{2}{c|}{Heart}    & \multicolumn{2}{c|}{Turtle}   & \multicolumn{2}{c|}{Fish} & \multicolumn{2}{c}{Car}  \\\hline
Method          & QIS   & intCV                 & QIS   & intCV                 & QIS   & intCV             & QIS   & intCV             \\\hline
Ave. time /s    & 11.86 & 3.24                  & 13.17 & 3.72                  & 12.59 & 3.86              & 12.94 & 3.98              \\
No. of Clicks   & 8     & 8                     & 2     & 2                     & 6     & 6                 & 4     & 4                 \\
Total time /s   & 94.88 & 25.92                 & 26.34 & 7.44                  & 75.54 & 23.16             & 51.76 & 15.92             \\\hline
\bottomrule
\end{tabular}
\caption{\han{Comparison of computational efficiency between our method and the interactive Chan-Vese (iCV) model. The time reported for each step is the total time averaged over the number of steps. In this experiment, the number and placement of user clicks are kept identical for both methods across the various problems.}}
\label{tb:CompareCV}
\end{table}

\newcolumntype{C}{>{\centering\arraybackslash}p{1.5cm}}
\begin{table}[!ht]
\centering
\begin{tabular}{l|CC|CC}
\toprule
Experiment      & \multicolumn{2}{c|}{Flower}   & \multicolumn{2}{c}{Car}   \\\hline
Method          & QIS   & GrabCut               & QIS   & GrabCut           \\\hline
Ave. time /s    & 11.67 & 4.17                  & 11.27 & 4.23              \\
No. of Clicks   & 4     & 7                     & 5     & 8                 \\
Total time /s   & 46.68 & 29.19                 & 56.35 & 33.84             \\\hline
\bottomrule
\end{tabular}
\caption{\han{Comparison of computational efficiency between our method and GrabCut \cite{rother2004grabcut}. The time for each step is the total time averaged over the number of steps.}}
\label{tb:CompGrabCut}
\end{table}

\end{document}